\definecolor{LightCyan}{rgb}{0.8, 0.9, 1}
\title{Variance-Aware Regret Bounds for Stochastic Contextual Dueling Bandits}
\author{Qiwei Di$^{1}$\thanks{Equal Contribution}\ \ , 
Tao Jin$^{2}$$^*$, 
Yue Wu$^{1}$,
Heyang Zhao$^{1}$,
Farzad Farnoud$^{2}$\thanks{Co-corresponding Authors}\ \ , 
Quanquan Gu$^{1}$$^\dagger$\\
$^1$Department of Computer Science, University of California, Los Angeles\\
$^2$Department of Computer Science, University of Virginia \\
\texttt{qiwei2000@cs.ucla.edu, taoj@virginia.edu, 
ywu@cs.ucla.edu,}\\
\texttt{hyzhao@cs.ucla.edu,farzad@virginia.edu,qgu@cs.ucla.edu}
}
\begin{document}

\maketitle

\begin{abstract}
Dueling bandits is a prominent framework for decision-making involving preferential feedback, a valuable feature that fits various applications involving human interaction, such as ranking, information retrieval, and recommendation systems. While substantial efforts have been made to minimize the cumulative regret in dueling bandits, a notable gap in the current research is the absence of regret bounds that account for the inherent uncertainty in pairwise comparisons between the dueling arms. Intuitively, greater uncertainty suggests a higher level of difficulty in the problem.  To bridge this gap, this paper studies the problem of contextual dueling bandits, where the binary comparison of dueling arms is generated from a generalized linear model (GLM). We propose a new SupLinUCB-type algorithm that enjoys computational efficiency and a variance-aware regret bound $\tilde O\big(d\sqrt{\sum_{t=1}^T\sigma_t^2} + d\big)$, where $\sigma_t$ is the variance of the pairwise comparison in round $t$, $d$ is the dimension of the context vectors, and $T$ is the time horizon. Our regret bound naturally aligns with the intuitive expectation — in scenarios where the comparison is deterministic, the algorithm only suffers from an $\tilde O(d)$ regret. We perform empirical experiments on synthetic data to confirm the advantage of our method over previous variance-agnostic algorithms.


\end{abstract}

\section{Introduction}
The multi-armed bandit (MAB) model has undergone comprehensive examination as a framework for decision-making with uncertainty. Within this framework, an agent has to select one specific \textit{``arm"} to pull in each round, and receives a \textit{stochastic} reward as feedback. The objective is to maximize the cumulative reward accumulated over all rounds. While the MAB model provides a robust foundation for various applications, the reality is that many real-world tasks present an intractably large action space coupled with intricate contextual information. Consequently, this challenge has led to the proposal of the (linear) contextual bandit model, where the reward is intricately linked to both the context associated with the selected arm and the underlying reward function. A series of work into the linear contextual bandits has led to efficient algorithms such as LinUCB \citep{li2010contextual,Chu2011ContextualBW} and OFUL \citep{abbasi2011improved}. 

In scenarios where feedback is based on subjective human experiences – a phenomenon evident in fields such as information retrieval \citep{yue2009interactively}, ranking \citep{Minka2018TrueSkill2A}, crowdsourcing \citep{chen2013pairwise}, and Reinforcement Learning from Human Feedback (RLHF) \citep{ouyang2022training} – preferential choices emerge as a more natural and intuitive form of feedback compared with numerical evaluations. The rationale behind preference feedback lies in the fact that numerical scores can exhibit significant variability among individuals, resulting in noisy and poorly calibrated rewards. On the contrary, a binary signal from preferential feedback remains independent of scale and is thus more reliable. This distinction gives rise to a specialized variant of the MAB problem known as \textit{dueling bandits} \citep{yue2012k}. In this setting, the agent simultaneously pulls two arms and receives binary preferential feedback, which essentially indicates the outcome of a comparison between the chosen arms. A line of works proposed efficient and practical algorithms for multi-armed dueling bandits based on upper confidence bound (UCB) \citep{Zoghi2014RelativeUC, Zoghi2015CopelandDB} or Thompson sampling \citep{wu2016double}. Similar to linear contextual bandits, considerable effort has been invested in developing efficient algorithms that minimize the cumulative regret for the contextual dueling bandits \citep{Saha2021OptimalAF, bengs2022stochastic}. 

Intuitively, the variance of the noise in the feedback signal determines the difficulty of the problem. To illustrate, consider an extreme case, where the feedback of a linear contextual bandit is noiseless (i.e., the variance is zero). A learner can recover the underlying reward function precisely by exploring each dimension only once, and suffer a $\tilde O(d)$ regret in total, where $d$ is the dimension of the context vector. This motivates a series of works on establishing variance-aware regret bounds for multi-armed bandits, e.g. \citep{Audibert2009ExplorationexploitationTU,Mukherjee2017EfficientUCBVAA} and contextual bandits, e.g. \citep{zhou2021nearly, Zhang2021ImprovedVC,kim2022improved,zhao2023optimal, zhao2023variance}. This observation also remains valid when applied to the dueling bandit scenario. In particular, the binary preferential feedback is typically assumed to adhere to a Bernoulli distribution, with the mean value denoted by $p$. The variance reaches its maximum when $p$ is close to $1/2$, a situation that is undesirable in human feedback applications, as it indicates a high level of disagreement or indecision. Therefore, maintaining a low variance in comparisons is usually preferred, and variance-dependent dueling algorithms are desirable because they can potentially perform better than those algorithms that only have worst-case regret guarantees. This leads to the following research question:

\begin{quote}
\emph{Can we design a dueling bandit algorithm with a variance-aware regret bound?}
\end{quote}

We give an affirmative answer to this question by studying the dueling bandit problem with a contextualized generalized linear model, which is in the same setting as \citet{Saha2021OptimalAF,bengs2022stochastic}. We summarize our contributions as follows: 

\begin{itemize}[leftmargin=*]
  \item We propose a new algorithm, named \texttt{VACDB}, to obtain a variance-aware regret guarantee. This algorithm is built upon several innovative designs, including (1) adaptation of multi-layered estimators to generalized linear models where the mean and variance are coupled (i.e., Bernoulli distribution), (2) symmetric arm selection that naturally aligns with the actual reward maximization objective in dueling bandits. 

  \item We prove that our algorithm enjoys a variance-aware regret bound $\tilde O\big(d\sqrt{\sum_{t=1}^T\sigma_t^2} + d\big)$, where $\sigma_t$ is the variance of the comparison in round $t$. 
  Our algorithm is computationally efficient and does not require any prior knowledge of the variance level, which is available in the dueling bandit scenario. 
  In the deterministic case, our regret bound becomes $
  \tilde O(d)$, showcasing a remarkable improvement over previous works. When the variances of the pairwise comparison are the same across different pairs of arms, our regret reduces to the worst-case regret of  $\tilde O\big(d\sqrt{T}\big)$, which matches the lower bound $\Omega(d\sqrt{T})$ proved in \citet{bengs2022stochastic} 

  \item We compare our algorithm with many strong baselines on synthetic data. Our experiments demonstrate the empirical advantage of the proposed algorithm in terms of regret and adaptiveness when faced with environments with varying variances. 
  
  \item As an additional outcome of our research, we identified an unrigorous argument in the existing analysis of MLE for generalized linear bandits. To rectify this issue, we provide a rigorous proof based on Brouwer's invariance of domain property \citep{brouwer1911beweis}, which is discussed further in \cref{section:weakness}. 

\end{itemize}

\paragraph{Notation}
In this paper, we use plain letters such as $x$ to denote scalars, lowercase bold letters such as $\xb$ to denote vectors and uppercase bold letters such as $\Xb$ to denote matrices. For a vector $\xb$, $\|\xb\|_2$ denotes its $\ell_2$-norm. The weighted $\ell_2$-norm associated with a positive-definite matrix $\Ab$ is defined as $\|\xb\|_{\Ab} = \sqrt{\xb^{\top} \Ab \xb}$. For two symmetric matrices $\Ab$ and $\Bb$, we use $\Ab \succeq \Bb$ to denote $\Ab-\Bb$ is positive semidefinite. We use $\ind$ to denote the indicator function and $\textbf{0}$ to denote the zero vector. For a postive integer $N$, we use $[N]$ to denote $\{1,2,\ldots,N\}$. We use $\xb_{1:t}$ to denote the set $\{\xb_i\}_{ 1\le i \le t}$. We use standard asymptotic notations including $O(\cdot), \Omega(\cdot), \Theta(\cdot)$, and $\tilde O(\cdot),\tilde\Omega(\cdot), \tilde\Theta(\cdot)$ will hide logarithmic factors.

\section{Related Work}
\noindent\textbf{Multi-Armed Bandits and Contextual Bandits.} The multi-armed bandit problem involves an agent making sequential decisions among multiple arms based on the observation of stochastic reward, with the goal of maximizing the cumulative rewards over time. It has been widely studied, including works such as \citet{lai1985asymptotically,lai1987adaptive,auer2002using,Auer2002FinitetimeAO,kalyanakrishnan2012pac,Lattimore2020BanditA,agrawal2012analysis}. 
To deal with large decision spaces with potentially infinitely many actions or to utilize contextual information, extensive studies have been conducted in contextual bandits. Some work focused on contextual linear bandits, where the mean reward of an arm is a linear function of some feature vectors, including algorithms such as LinUCB/SupLinUCB \citep{Chu2011ContextualBW}, OFUL \citep{abbasi2011improved}. Other works, such as \citep{filippi2010parametric,li2017provably,jun2017scalable}, studied the generalized linear bandits where the mean reward is from a generalized linear model (GLM).

\noindent\textbf{Dueling Bandits.} The problem of dueling bandits is a variant of the multi-armed bandits, where the stochastic reward is replaced by a pairwise preference. This model was first proposed in \citet{yue2012k}. Many works \citep{Zoghi2014RelativeUC,komiyama2015regret} studied this problem, assuming the existence of a Condorcet winner, which is one arm that beats all the other arms. There are also works on other types of winners such as Copeland winner \citep{Zoghi2015CopelandDB,wu2016double,komiyama2016copeland}, Borda winner \citep{jamieson2015sparse,falahatgar2017maxing,heckel2018approximate, Saha2021AdversarialDB,wu2023borda} and von Neumann winner \citep{Ramamohan2016DuelingBB, Dudk2015ContextualDB, balsubramani2016instance}. 
Similar to the idea of contextual bandits, some works considered regret minimization for dueling bandits with context information. \citet{kumagai2017regret} studied the contextual dueling bandit problem where the feedback is based on a cost function. They proposed a stochastic mirror descent algorithm and proved the regret upper bound under strong convexity and smoothness assumptions.
\citet{Saha2021OptimalAF} proposed algorithms and lower bounds for contextual preference bandits with logistic link function, considering pairwise and subsetwise preferences, respectively. \citet{bengs2022stochastic} further extended to the contextual linear stochastic transitivity model, allowing arbitrary comparison function, and provided efficient algorithms along with a matching lower bound for the weak regret. For a recent comprehensive survey of dueling bandits, please refer to \citet{bengs2021preference}. Our work studies the same model as \citet{Saha2021OptimalAF, bengs2021preference}.

\noindent\textbf{Variance-Aware Bandits.} \label{sec:vab}
It has been shown empirically that leveraging variance information in multi-armed bandit algorithms can enjoy performance benefits \citep{Auer2002FinitetimeAO}. In light of this,~\citet{Audibert2009ExplorationexploitationTU} proposed an algorithm, named \texttt{UCBV}, which is based on Bernstein's inequality equipped with empirical variance. It provided the first analysis of variance-aware algorithms, demonstrating an improved regret bound. 
\texttt{EUCBV} ~\citet{Mukherjee2017EfficientUCBVAA} is another variance-aware algorithm that employs an elimination strategy. It incorporates variance estimates to determine the confidence bounds of the arms. 
For linear bandits, \citet{zhou2021nearly} proposed a Bernstein-type
concentration inequality for self-normalized martingales and designed an algorithm named \texttt{Weighted OFUL}. This approach used a weighted ridge regression scheme, using variance to discount each sample's contribution to the estimator. In particular, they proved a variance-dependent regret upper bound, which was later improved by \citet{zhou2022computationally}. These two works assumed the knowledge of variance information. 
Without knowing the variances, \citet{zhang2021improved} and \citet{kim2022improved} obtained the variance-dependent regret bound by constructing variance-aware confidence sets.  \citep{zhao2023optimal} proposed an algorithm named \texttt{MOR-UCB} with the idea of partitioning the observed data into several layers and grouping samples with similar variance into the same layer. A similar idea was used in \citet{zhao2023variance} to design a SupLin-type algorithm \texttt{SAVE}.
It assigns collected samples to $L$ layers according to their estimated variances, where each layer has twice the variance upper bound as the one at one level lower. In this way, for each layer, the estimated variance of one sample is at most twice as the others. Their algorithm is computationally tractable with a variance-dependent regret bound based on a Freedman-type concentration inequality and adaptive variance-aware exploration.

\section{Problem Setup}\label{sec:commom-setup}
In this work, we consider a preferential feedback model with contextual information. 
In this model, an agent learns through sequential interactions with its environment over a series of rounds indexed by $t$, where $t\in[T]$ and $T$ is the total number of rounds. 
In each round $t$, the agent is presented with a finite set of alternatives, with each alternative being characterized by its associated feature in the contextual set $\cA_t \subseteq \RR^d$. Following the convention in bandit theory, we refer to these alternatives as \textit{arms}. Both the number of alternatives and the contextual set $\cA_t$ can vary with the round index $t$.
Afterward, the agent selects a pair of arms, with features $(\xb_t,\yb_t)$ respectively. The environment then compares the two selected arms and returns a stochastic feedback $o_t$, which takes a value from the set $\{0,1\}$. This feedback informs the agent which arm is preferred: When $o_t = 1$ (resp.\ $o_t=0$), the arm with feature $\xb_t$ (resp.\ $\yb_t$) wins. 

We assume that stochastic feedback $o_t$ follows a Bernoulli distribution, where the expected value $p_t$ is determined by a generalized linear model (GLM). To be more specific, let $\mu(\cdot)$ be a fixed link function that is increasing monotonically and satisfies $\mu(x) + \mu(-x) = 1$. We assume the existence of an \textit{unknown} parameter $\btheta^* \in \RR^d$ which generates the preference probability when two contextual vectors are given, i.e.
\begin{align*}
    \PP(o_t = 1) = \PP(\mathrm{arm~with~}\xb_t~\text{is preferred over } \mathrm{arm~with~\yb_t})
    = p_t 
    = \mu((\xb_t - \yb_t)^\top\btheta^*).
    \end{align*}
This model is the same as the linear stochastic transitivity (LST) model in \citet{bengs2022stochastic}, which includes the Bradley-Terry-Luce (BTL) model~\citep{Hunter2003MMAF, Luce1959IndividualCB}, Thurstone-Mosteller model~\citep{Thurstone1994ALO} and the exponential noise model as special examples. Please refer to \citet{bengs2022stochastic} for details. The preference model studied in \citet{Saha2021OptimalAF} can be treated as a special case where the link function is logistic.

We make the assumption on the boundness of the true parameter $\btheta^*$ and the feature vector.
\begin{assumption}
  \label{assume:x-bounded}
  $\|\btheta^*\|_2 \le 1$. There exists a constant $A > 0$ such that for all $t \in [T]$ and all $\xb \in \cA_t$, $\|\xb\|_2 \le A$. 
\end{assumption}
Additionally, we make the following assumption on the link function $\mu$, which is common in the study of generalized linear contextual bandits \citep{filippi2010parametric,li2017provably}.
\begin{assumption}
  \label{assume:kappa}
  The link function $\mu$ is differentiable. Furthermore, the first derivative $\dot \mu$ satisfies
  $
    \kappa_{\mu} \le \dot\mu(\cdot) \le L_\mu
    $
  for some constants $L_\mu, \kappa_\mu > 0$.
\end{assumption} 


We define the random noise $\epsilon_t = o_t - p_t$. Since the stochastic feedback $o_t$ adheres to the Bernoulli distribution with expected value $p_t$, $\epsilon_t \in \{-p_t, 1 - p_t\}$. From the definition of $\epsilon_t$, we can see that $|\epsilon_t| \le 1$. Furthermore, we make the following assumptions:
\begin{align*}
    \EE[\epsilon_t|\xb_{1:t},\yb_{1:t},\epsilon_{1:t-1}] = 0,  \EE[\epsilon_t^2|\xb_{1:t},\yb_{1:t},\epsilon_{1:t-1}] = \sigma_t^2.
\end{align*}
Intuitively, $\sigma_t$ reflects the difficulty associated with comparing the two arms:
\begin{itemize}[leftmargin = *]
\item When $p_t$ is around $1/2$, it suggests that the arms are quite similar, making the comparison challenging. Under this circumstance, the variance $\sigma_t$ tends toward a constant, reaching a maximum value of $1/4$.
\item On the contrary, as
$p_t$ approaches 0 or 1, it signals that one arm is distinctly preferable over the other, thus simplifying the comparison. In such scenarios, the variance $\sigma_t$ decreases significantly toward 0.
\end{itemize}
The learning objective is to minimize the cumulative average regret defined as
\begin{align}
  \label{eqn:regret}
  \mathrm{Regret}(T) =
  \frac{1}{2}{\sum}_{t=1}^{T}
  \big[
  2  \xb^{*\top}_t \btheta^{*} - (\xb_t + \yb_t)^\top \btheta^{*}
  \big],
\end{align} where $\xb^*_t = \arg\max_{\xb \in \cA_t} \xb^{\top}\btheta^*$ is the contextual/feature vector of the optimal arm in round $t$. This definition is the same as the average regret studied in \citep{Saha2021OptimalAF, bengs2022stochastic}. Note that in \citet{bengs2022stochastic}, besides the average regret, they also studied another type of regret, called weak regret. Since the weak regret is smaller than the average regret, the regret bound proved in our paper can immediately imply a regret bound defined by the weak regret. 
\section{Algorithm}\label{section:algo}
\subsection{Overview of the Algorithm}
\begin{algorithm}[!ht]
\caption{Variance-Aware Contextual Dueling Bandit (\texttt{VACDB})}
    \begin{algorithmic}[1]\label{main algo}
        \STATE \textbf{Require:} $\alpha > 0$, $L \leftarrow \lceil \log_2 (1/\alpha)\rceil$, $\kappa_\mu$, $L_\mu$.
        \STATE \textbf{Initialize:} For $\ell \in [L]$, $\hat \bSigma_{1,\ell} \leftarrow 2^{-2\ell} \Ib$, $\hat \btheta_{1,\ell} \leftarrow \textbf{0}, \bPsi_{1,\ell} \leftarrow \emptyset$, $\hat \beta_{1,\ell} \leftarrow 2^{-\ell}(1+1/\kappa_\mu)$
        \FOR{$t = 1, \ldots, T$}
        \STATE Observe $\cA_t$
        \STATE Let $\cA_{t,1} \leftarrow \cA_t$, $\ell \leftarrow 1$.
        \WHILE{$\xb_t,\yb_t$ are not specified}\label{algo:line6}
        \IF{$\|\xb-\yb\|_{\hat \bSigma_{t,\ell}^{-1}} \le \alpha$ for all $\xb,\yb \in \cA_{t,\ell}$} 
        \label{algo:line7}
        \STATE\label{algo:line8} Choose $\xb_t,\yb_t = \argmax_{\xb,\yb \in \cA_{t,\ell}} \Big\{(\xb + \yb)^\top \hat \btheta_{t,\ell} + \hat \beta_{t,\ell}\|\xb-\yb\|_{\hat \bSigma_{t,\ell}^{-1}}\Big\}$ \\
         and observe $o_t = \ind (\xb_t \succ \yb_t)$  \qquad \qquad \qquad\textcolor{blue}{\texttt{//Exploitation (Lines \ref{algo:line7}-\ref{algo:line9})} }
        \STATE \label{algo:line9} Keep the same index sets at all layers: $\bPsi_{t+1,\ell'} \leftarrow \bPsi_{t,\ell'}$ for all $\ell' \in [L]$
        \ELSIF{$\|\xb-\yb\|_{\hat \bSigma_{t,\ell}^{-1}} \le 2^{-\ell}$ for all $\xb,\yb \in \cA_{t,\ell}$}
        \label{algo:line10}
        \STATE \label{algo:line11} $\cA_{t,\ell+1} \leftarrow \Big\{\xb \in \cA_{t,\ell}\mid {\xb}^\top \hat \btheta_{t,\ell} \geq \max_{\xb'\in \cA_{t,\ell}}{\xb'}^\top \hat \btheta_{t,\ell} -  2^{-\ell} \hat \beta_{t,\ell}\Big\}$
        \STATE \label{algo:line12}$\ell = \ell + 1$ \qquad \qquad \qquad \qquad \qquad \qquad \qquad \textcolor{blue}{\texttt{//Elimination (Lines \ref{algo:line10}-\ref{algo:line12})}}
        \ELSE\label{algo:line13}
        \STATE\label{algo:line14} Choose $\xb_t,\yb_t$ such that $\|\xb_t -\yb_t\|_{\hat \bSigma_{t,\ell}^{-1}} > 2^{-\ell}$ \\
        and observe $o_t = \ind (\xb_t \succ \yb_t)$ \qquad \qquad \qquad  \textcolor{blue}{\texttt{//Exploration (Lines \ref{algo:line14}-\ref{algo:line16})}}
        \STATE Compute the weight $w_t \leftarrow 2^{-\ell} / \|\xb_t -\yb_t\|_{\hat \bSigma_{t,\ell}^{-1}}$
        \STATE \label{algo:line16}Update the index sets $\bPsi_{t+1,\ell} \leftarrow \bPsi_{t,\ell} \cup \{t\}$ and $\bPsi_{t+1,\ell'} \leftarrow \bPsi_{t,\ell'}$ for all $\ell' \in [L]/\{\ell\}$
        \ENDIF
        \ENDWHILE\label{algo:line18}
        \STATE \label{algo:line22} For $\ell \in [L]$ such that $\bPsi_{t+1,\ell} \neq \bPsi_{t,\ell}$, update
        $
            \hat \bSigma_{t+1,\ell} \leftarrow \hat \bSigma _{t,\ell} + w_t^2 (\xb_t-\yb_t)(\xb_t-\yb_t)^\top
        $
        \STATE Calculate the MLE  $\hat \btheta_{t+1,\ell}$ by solving the equation:
        \begin{align*}
            2^{-2\ell}\kappa_\mu\btheta + \sum_{s \in \bPsi_{t+1,\ell}}w_s^2\Big( \mu\big((\xb_s-\yb_s)^\top \btheta\big)-o_s\Big)(\xb_s-\yb_s) = \textbf{0}
        \end{align*}
        \STATE \label{algo:line24} Compute $\hat \beta_{t+1,\ell}$ according to \eqref{bonus}
        \STATE \label{algo:line25} For $\ell \in [L]$ such that $\bPsi_{t+1,\ell} = \bPsi_{t,\ell}$, let $\hat \bSigma_{t+1,\ell} = \hat\bSigma_{t,\ell}$, $\hat \btheta_{t+1,\ell} \leftarrow \hat \btheta_{t,\ell}, \hat \beta_{t+1,\ell} \leftarrow \hat \beta_{t,\ell}$
        \ENDFOR
    \end{algorithmic}
\end{algorithm}

In this section, we present our algorithm named  \texttt{VACDB} in Algorithm \ref{main algo}. 
Our algorithm shares a similar structure with \texttt{Sta'D} in \citet{Saha2021OptimalAF} and \texttt{SupCoLSTIM} in \citet{bengs2022stochastic}. The core of our algorithm involves a sequential arm elimination process:
from Line \ref{algo:line6} to Line \ref{algo:line18}, our algorithm conducts arm selection with a layered elimination procedure. Arms are progressively eliminated across layers, with increased exploration precision in the subsequent layers. Starting at layer $\ell = 1$, our algorithm incorporates a loop comprising three primary conditional phases: Exploitation (Lines \ref{algo:line7}-\ref{algo:line9}), 
Elimination (Lines \ref{algo:line10}-\ref{algo:line12}) and Exploration (Lines \ref{algo:line14}-\ref{algo:line16}).
When all arm pairs within a particular layer have low uncertainty, the elimination procedure begins, dropping the arms with suboptimal estimated values. This elimination process applies an adaptive bonus radius based on variance information. A more comprehensive discussion can be found in \cref{section:adaptive}. 
Subsequently, it advances to a higher layer, where exploration is conducted over the eliminated set. Upon encountering a layer with arm pairs of higher uncertainty than desired, our algorithm explores them and receives the feedback. 
Once comprehensive exploration has been achieved across layers and the uncertainty for all remaining arm pairs is small enough, our algorithm leverages the estimated parameters in the last layer to select the best arm from the remaining arms. For a detailed discussion of the selection policy, please refer to \cref{section:arm}.
After arm selection in the exploration phase, the estimator of the current layer is updated (Lines \ref{algo:line22}-\ref{algo:line25}) using the regularized MLE, which will be discussed in more details in \cref{section:MLE}. Note that our algorithm maintains an index set $\Psi_{t,\ell}$ for each layer, comprising all rounds before round $t$ when the algorithm conducts exploration in layer $\ell$. As a result, for each exploration step, only one of the estimators $\hat\btheta_{t,\ell}$ needs to be updated. Furthermore, our algorithm updates the covariance matrix $\hat\bSigma_{t,\ell}$ used to estimate uncertainty (Line~\ref{algo:line22}).

\subsection{Regularized MLE}\label{section:MLE}

Most of the previous work adopted standard MLE techniques to maintain an estimator of $\btheta^*$ in the generalized linear bandit model \citep{filippi2010parametric, li2017provably}, which requires an initial exploration phase to ensure a balanced input dataset across $\RR^d$ for the MLE. In the dueling bandits setting, where the feedback in each round can be seen as a generalized linear reward, \cite{Saha2021OptimalAF,bengs2022stochastic} also applied a similar MLE in their algorithms. As a result, a random initial exploration phase is also inherited to ensure that the MLE equation has a unique solution. However, in our setting, where the decision set varies among rounds and is even arbitrarily decided by the environment, this initial exploration phase cannot be directly applied to control the minimum eigenvalue of the covariance matrix. 

To resolve this issue, we introduce a regularized MLE for contextual dueling bandits, which is more well-behaved in the face of extreme input data and does not require an additional exploration phase at the starting rounds. Specifically, the regularized MLE is the solution of the following equation: \begin{align} 
\lambda \btheta + \sum_{s}w_s^2\Big( \mu\big((\xb_s-\yb_s)^\top \btheta\big)-o_s\Big)(\xb_s-\yb_s) = \textbf{0}\label{eq:MLE},
\end{align} where we add the additional regularization term $\lambda \btheta$ to make sure that the estimator will change mildly. From the theoretical viewpoint, our proposed regularization term leads to a non-singularity guarantee for the covariance matrix. Additionally, we add some weights here to obtain a tighter concentration inequality. Concretely, with a suitable choice of the parameters in each layer and a Freedman-type inequality first introduced in \citet{zhao2023variance}, we can prove a concentration inequality for the estimator in the $\ell$-th layer:
\begin{align}
\label{eq:concentration}
    \Big\|\btheta^* - \hat \btheta_{t,\ell}\Big\|_{\hat \bSigma_{t,\ell}} \le \frac{2^{-\ell}}{\kappa_\mu}\bigg[16  \sqrt{\textstyle{\sum_{s \in \bPsi_{t,\ell}}}w_s^2\sigma_s^2 \log(4t^2L/\delta)} + 6 \log(4t^2L/\delta)\bigg] + 2^{-\ell}.
\end{align}
This upper bound scales with $2^{-\ell}$, which arises from our choice of the weights.

The regularized MLE can be formulated as a finite-sum offline optimization problem. For many widely used models, such as the Bradley-Terry-Luce (BTL) model \citep{Hunter2003MMAF, Luce1959IndividualCB}, the regularized MLE is a strongly convex and smooth optimization problem. We can solve it using accelerated gradient descent \citep{nesterov2003introductory} and SVRG \citep{johnson2013accelerating}, both of which achieve a linear rate of convergence. This can mitigate the scalability issues caused by the increasing number of iterations. The regularized MLE can also be solved by an online learning algorithm such as in \citet{jun2017scalable} and \citet{zhao2023optimal}, where additional effort is required for the analysis. 

\subsection{Multi-layer Structure with Variance-Aware Confidence Radius}\label{section:adaptive}
{Due to the multi-layered structure of our algorithm, the construction of the confidence set is of paramount importance.  Our algorithm distinguishes itself from prior multi-layered algorithms \citep{Saha2021OptimalAF,bengs2022stochastic} primarily through a variance-aware adaptive selection of the confidence radius, which helps to achieve a variance-aware regret bound.
}
Intuitively, we should choose the confidence radius $\hat \beta_{t,\ell}$ based on the concentration inequality \eqref{eq:concentration}. However, it depends on the true variance $\sigma_s$, of which we do not have prior knowledge. To address this issue, we estimate it using the estimator $\hat \btheta_{t,\ell}$. We choose 
\begin{align}
\notag
\hat \beta_{t,\ell}&:= \frac{16 \cdot 2^{-\ell}}{\kappa_\mu}\sqrt{\Big(8  \hat{\text{Var}}_{t,\ell} + 18 \log(4(t+1)^2L/\delta)\Big)\log(4t^2L/\delta)}\\\label{bonus}
& \qquad + \frac{6 \cdot 2^{-\ell}}{\kappa_\mu}\log(4t^2L/\delta) + 2^{-\ell+1},
\end{align}
where
\begin{align*}
    \hat{\text{Var}}_{t,\ell} := 
    \begin{cases}
        \textstyle{\sum}_{s \in \Psi_{t,\ell}}w_s^2\Big(o_s - \mu(( \xb_s-\yb_s)^\top\hat \btheta_{t,\ell})\Big)^2, & 2^{\ell} \ge 64 (L_\mu /\kappa_\mu)\sqrt{\log(4(t+1)^2L/\delta)},\\
        |\Psi_{t,\ell}|, & \text{otherwise}.
    \end{cases}
\end{align*}
The varied selections of $\hat{\text{Var}}_{t,\ell}$ arise 
from the fact that our variance estimator becomes more accurate at higher layers.
For those low layers, we employ the natural upper bound $\sigma_{i} \le 1$. Note that this situation arises only $\Theta(\log \log (T/\delta))$ times, which is a small portion of the total layers $L = \Theta(\log T)$. In our proof, we deal with two cases separately. Due to the limited space available here, the full proof can be found in \cref{section:proof}. 

\subsection{Symmetric Arm Selection}\label{section:arm}
In this subsection, we focus on the arm selection policy described in Line \ref{algo:line9}. To our knowledge, this policy is new and has never been studied in prior work for the (generalized) linear dueling bandit problem. In detail, suppose that we have an estimator $\hat \btheta_t$ in round $t$ that lies in a high probability confidence set:
\begin{align*}
    \big\{\btheta: \big\|\btheta - \btheta^*\big\|_{\hat\bSigma_{t}} \le \beta_t\big\},
\end{align*}
where $\hat \bSigma_t = \lambda \Ib + \sum_{i=1}^{t-1}(\xb_i-\yb_i)(\xb_i-\yb_i)^\top$. Our choice of arms can be written as
\begin{align}
    \xb_t,\yb_t = \argmax_{\xb,\yb \in \cA_t}  \Big[ (\xb+\yb)^\top\hat \btheta_{t} + \beta_t\|\xb-\yb\|_{\hat \bSigma_{t}^{-1}}\Big].
    \label{eq:max_pair_ucb}
\end{align}
Intuitively, we utilize $(\xb+\yb)^\top\hat \btheta_{t}$ as the estimated score and incorporate an exploration bonus dependent on $\|\xb-\yb\|_{\hat \bSigma_{t}^{-1}}$. Our symmetric selection of arms aligns with the nature of dueling bandits where the order of arms does not matter. Here we compare it with several alternative arm selection criteria that have appeared in previous works.

The \texttt{MaxInP} algorithm in \citet{Saha2021OptimalAF} builds the so-called ``promising'' set that includes the optimal arm:
\begin{align*}
    \cC_t = \Big\{\xb \in \cA_t \mid (\xb-\yb)^\top \hat \btheta_t + \beta_t \|\xb-\yb\|_{\hat \bSigma_t^{-1}} \ge 0, \forall \yb \in \cA_t\Big\}.
\end{align*}
It chooses the symmetric arm pair from the set $\cC_t$ that has the highest pairwise score variance (maximum informative pair), i.e.,  
\begin{align*}
    \xb_t,\yb_t = \argmax_{\xb,\yb \in \cC_t} \|\xb-\yb\|_{\bSigma_t^{-1}}.
\end{align*}

The \texttt{Sta'D} algorithm in \citet{Saha2021OptimalAF} uses an asymmetric arm selection criterion, which selects the first arm with the highest estimated score, i.e.,
\begin{align*}
    \xb_t = \argmax_{\xb \in \cA_t} \xb^\top \hat\btheta_t.
\end{align*}
Following this, it selects the second arm as the toughest competitor to the arm $\xb_t$, with a bonus term related to $\|\xb_t-\yb\|_{\Sigma_t^{-1}}$, i.e.,
\begin{align}
    \yb_t = \argmax_{\yb \in \cA_t} \yb^\top \hat\btheta_t + 2\beta_t\|\xb_t-\yb\|_{\bSigma_t^{-1}}.
    \label{eq:colstim_sel}
\end{align}
Similar arm selection criterion has also been used in the \texttt{CoLSTIM} algorithm \citep{bengs2022stochastic}.
We can show that these two alternative arm selection policies result in comparable regret decomposition and can establish similar regret upper bound. A more detailed analysis can be found in Appendix \ref{appendix:arm selection}.

\section{Main Results}
\subsection{Variance-aware Regret Bound}
In this section, we summarize our main results in the following theorem.
\begin{theorem}\label{main theorem}
    If we set $\alpha = 1/(T^{3/2})$, then with probability at least $1-2\delta$, the regret of Algorithm~\ref{main algo} is bounded as 
    \begin{align*}
        \text{Regret}(T) = \tilde O\Bigg(\frac{d}{\kappa_\mu}\sqrt{\sum_{t=1}^T\sigma_t^2} + d\Big(\frac{L_\mu^2}{\kappa_\mu^2}+\frac{1}{\kappa_\mu}\Big)\Bigg).
    \end{align*}
\end{theorem}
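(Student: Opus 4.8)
The plan is to decompose the cumulative regret according to the layer in which each round's arms are selected, and then bound the contribution of exploration rounds and exploitation rounds separately using the concentration inequality \eqref{eq:concentration} together with the variance-aware confidence radius \eqref{bonus}. First, I would establish a ``good event'' on which the estimation guarantee $\|\btheta^*-\hat\btheta_{t,\ell}\|_{\hat\bSigma_{t,\ell}}\le \hat\beta_{t,\ell}$ holds simultaneously for all $t\in[T]$ and all $\ell\in[L]$; this requires invoking \eqref{eq:concentration}, showing that the empirical quantity $\hat{\text{Var}}_{t,\ell}$ (with its two-case definition) upper-bounds $\sum_{s\in\bPsi_{t,\ell}} w_s^2\sigma_s^2$ up to the logarithmic slack built into \eqref{bonus}, and a union bound over $t$ and $\ell$. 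The two cases in $\hat{\text{Var}}_{t,\ell}$ need to be handled exactly as flagged in \cref{section:adaptive}: for layers with $2^\ell$ large enough the empirical second-moment estimator concentrates around the true weighted variance (using $\|\btheta^*-\hat\btheta_{t,\ell}\|$ small and $\dot\mu\le L_\mu$); for the $O(\log\log(T/\delta))$ low layers we fall back on $\sigma_s^2\le 1$ and $w_s^2\le 1$.

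Next, on the good event I would control the per-round instantaneous regret. For a round $t$ handled by exploitation at layer $\ell$ (Lines \ref{algo:line7}--\ref{algo:line9}), the symmetric UCB selection \eqref{eq:max_pair_ucb} guarantees $(\xb_t+\yb_t)^\top\hat\btheta_{t,\ell}+\hat\beta_{t,\ell}\|\xb_t-\yb_t\|_{\hat\bSigma_{t,\ell}^{-1}}\ge 2\xb_t^{*\top}\hat\btheta_{t,\ell}$ (taking $\xb=\yb=\xb_t^*$, which is still in $\cA_{t,\ell}$ by the elimination rule of Line \ref{algo:line11}); combining with the confidence bound and the exploitation-trigger condition $\|\xb-\yb\|_{\hat\bSigma_{t,\ell}^{-1}}\le\alpha$ yields an instantaneous regret of order $\alpha\hat\beta_{t,\ell}+2^{-\ell}$, which summed over $T$ rounds contributes only the additive $\tilde O(d)$ term once $\alpha=T^{-3/2}$. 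The surviving-arm invariant — that $\xb_t^*$ is never eliminated at any layer — must be proved by induction across layers using \eqref{eq:concentration} and the elimination threshold in Line \ref{algo:line11}; this is the standard SupLinUCB-type argument but needs care because the elimination radius $2^{-\ell}\hat\beta_{t,\ell}$ is itself variance-dependent.

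The main work, and the expected obstacle, is bounding the exploration rounds. Let $\bPsi_{T+1,\ell}$ be the set of rounds assigned to layer $\ell$; each such round has $w_t^2\|\xb_t-\yb_t\|_{\hat\bSigma_{t,\ell}^{-1}}^2 = 2^{-2\ell}$ by the weight choice $w_t=2^{-\ell}/\|\xb_t-\yb_t\|_{\hat\bSigma_{t,\ell}^{-1}}$, so the elliptical potential / determinant-trace lemma gives $|\bPsi_{T+1,\ell}|\lesssim 2^{2\ell} d\log(\cdot)$. The instantaneous regret of an exploration round at layer $\ell$ is bounded, via the elimination history and the confidence radius at layer $\ell-1$, by roughly $2^{-\ell+1}\hat\beta_{t,\ell-1}$, i.e.\ order $2^{-\ell}\hat\beta_{t,\ell}$. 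Summing over layers, the dominant term is $\sum_{\ell}|\bPsi_{T+1,\ell}|\cdot 2^{-\ell}\cdot\frac{2^{-\ell}}{\kappa_\mu}\sqrt{\hat{\text{Var}}_{t,\ell}\log(\cdot)}$; after substituting $|\bPsi_{T+1,\ell}|\lesssim 2^{2\ell}d$ the $2^{-2\ell}$ cancels, leaving $\frac{d}{\kappa_\mu}\sum_\ell\sqrt{\hat{\text{Var}}_{\ell}\log(\cdot)}$, and a Cauchy--Schwarz step over the $L=\tilde O(1)$ layers together with $\sum_\ell\hat{\text{Var}}_{\ell}\lesssim L_\mu^2\sum_{t=1}^T\sigma_t^2 + (\text{lower-order})$ produces the claimed $\tilde O\big(\frac{d}{\kappa_\mu}\sqrt{\sum_t\sigma_t^2}\big)$. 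The delicate points are: (i) relating the per-round exploration regret to the \emph{previous} layer's radius, since a round only enters layer $\ell$ after failing the elimination test at layer $\ell-1$; (ii) converting the sum of empirical variances $\sum_{\ell,s}w_s^2(o_s-\mu(\cdot))^2$ back to $\sum_t\sigma_t^2$, which needs the weights $w_s\le 1$, the concentration of empirical variance, and the Bernstein-type control that the extra $\log$ terms in \eqref{bonus} absorb; and (iii) keeping the $L_\mu^2/\kappa_\mu^2$ and $1/\kappa_\mu$ dependencies separated so they land in the lower-order additive $d$ term rather than multiplying the leading $\sqrt{\sum\sigma_t^2}$ term. I would structure the final calculation as a single summation over $\ell\in[L]$, isolating the ``deterministic'' $2^{-\ell}$ contributions (geometric series, giving $\tilde O(d)$) from the ``stochastic'' variance contributions (giving the main term).
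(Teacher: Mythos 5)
Your proposal is correct and follows essentially the same route as the paper's proof: the same decomposition into exploitation rounds and per-layer exploration rounds, the same good events (the MLE concentration inequality plus concentration of the empirical variance estimator, with the low layers where $2^{\ell} < 64(L_\mu/\kappa_\mu)\sqrt{\log(4(T+1)^2L/\delta)}$ handled via the trivial bound $\sigma_s \le 1$), the same induction showing $\xb_t^*$ survives every elimination step, and the same per-layer exploration bound $\tilde O\big(d\cdot 2^{\ell}\hat\beta_{T,\ell-1}\big)$ obtained from the \emph{previous} layer's confidence radius together with the elliptical potential lemma, after which $\hat{\text{Var}}_{t,\ell}$ is converted back to $\sum_{t}\sigma_t^2$ exactly as you describe. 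One small correction: the per-round exploitation regret is $O(\alpha\,\hat\beta_{t,\ell})$ with no separate additive $2^{-\ell}$ term --- the $2^{-\ell}$ appearing in \eqref{eq:concentration} is already a summand of $\hat\beta_{t,\ell}$ and is therefore also multiplied by $\|\xb_t-\yb_t\|_{\hat\bSigma_{t,\ell}^{-1}}\le\alpha$; a genuinely additive $2^{-\ell}$ per round would sum to $\Omega(T\cdot 2^{-\ell})$, which is not lower order when exploitation triggers at a low layer.
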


This regret can be divided into two parts, corresponding to the regret incurred from the exploration steps (Line \ref{algo:line14}) and the exploitation steps (Line \ref{algo:line8}). The exploitation-induced regret is always $\tilde O(1)$ as shown in \eqref{eq:exploitation}, and thus omitted by the big-O notation. The total regret is dominated by the exploration-induced regret, which mainly depends on the total variance $\sum_{t=1}^{T}\sigma_t^2$. Note that the comparisons during the exploration steps only happen between non-identical arms ($\xb_t \ne \yb_t$).

\begin{remark}
    To show the advantage of variance awareness, consider the extreme case where the comparisons are deterministic. 
    More specifically, for any two arms with contextual vectors $\xb$ and $\yb$, the comparison between arm $\xb$ and item $\yb$ is determined by $o_t = \ind \big\{ \xb_t^\top\btheta^{*} > \yb_t^\top\btheta^{*} \big\}$, and thus has zero variance. Our algorithm can account for the zero variance, and the regret becomes $\tilde O(d)$, which is optimal since recovering the parameter $\btheta^* \in \RR^d$ requires exploring each dimension. 
\end{remark}

\begin{remark}
    {The setting we study is quite general, where the arm set is time-varying, and therefore, the variance of arms can vary with respect to time and arms. When we restrict our setting to a special case with uniform variances for all pairwise comparisons, i.e., $\sigma_t^2 = \sigma^2$ for all $t$, our upper bound becomes $\tilde O(\sigma d \sqrt T)$. This results in a regret bound that does not depend on the random variable $\sigma_t^2$.}
\end{remark}

\begin{remark}
{In the worst-case scenario, the variance of the arm comparison is upper bounded by $1/4$, our regret upper bound becomes $\tilde O(d\sqrt{T})$, which matches the regret lower bound $\Omega(d\sqrt{T})$ for dueling bandits with exponentially many arms proved in \citet{bengs2022stochastic}, up to logarithmic factors.
}This regret bound also recovers the regret bounds of \texttt{MaxInP} \citep{Saha2021OptimalAF} and \texttt{CoLSTIM} \citep{bengs2022stochastic}. Compared with \texttt{Sta'D} \citep{Saha2021OptimalAF} and \texttt{SupCoLSTIM} \citep{bengs2022stochastic}, our regret bound is on par with their regret bounds provided the number of arms $K$ is large. More specifically, their regret upper bounds are $\tilde O(\sqrt{dT\log K})$. When $K$ is exponential in $d$, their regret bound becomes $\tilde O(d\sqrt{T})$, which is of the same order as our regret bound.
\end{remark} 

\begin{remark}
Notably, in \citet{bengs2022stochastic}, they made an assumption that the context vectors can span the total $d$-dimensional Euclidean space, which is essential in their initial exploration phase. In our work, we replace the initial exploration phase with a regularizer, thus relaxing their assumption.
\end{remark}


\subsection{Proof Sketch of Theorem \ref{main theorem}}
As we describe in Section \ref{section:algo}, the arm selection is specified in two places, the exploration part (Lines~\ref{algo:line14} - \ref{algo:line16}) and the exploitation part (Lines \ref{algo:line8} - \ref{algo:line9}). Given the update rule of the index set,
each step within the exploration part will be included by the final index set $\Psi_{T+1,\ell}$ of a singular layer $\ell$. Conversely, steps within the exploitation part get into $T/\cup_{\ell \in [L]}\Psi_{T+1,\ell}$. Using this division, we can decompose the regret into :
\begin{align*}
    \text{Regret}(T) &= \frac{1}{2}\bigg[\underbrace{\textstyle{\sum_{s \in [T]/(\cup_{\ell \in [L]}\Psi_{T+1,\ell})}}\Big(2\xb_s^{*\top}\btheta^* - (\xb_s^\top\btheta^*+\yb_s^\top\btheta^*)\Big)}_{\text{exploitation}}\\
    & \qquad + \underbrace{\textstyle{\sum_{\ell \in [L]}}\textstyle{\sum_{s \in \Psi_{T+1,\ell}}}\Big(2\xb_s^{*\top}\btheta^* - (\xb_s^\top\btheta^*+\yb_s^\top\btheta^*)\Big)}_{\text{exploration}} \bigg].
\end{align*}
We bound the incurred regret of each part separately.

For any round $s \in T/\cup_{\ell \in [L]}\Psi_{T+1,\ell}$, the given condition for exploitation indicates the existence of a layer $\ell_s$ such that $\|\xb_s-\yb_s\|_{\hat \bSigma_{s,\ell}^{-1}} \le \alpha$ for all $\xb_s,\yb_s \in \cA_{s,\ell}$. Using the Cauchy inequality and the MLE described in Section \ref{section:MLE}, we can show that the regret incurred in round $s$ is smaller than $3\hat \beta_{s,\ell_s} \cdot \alpha$. Considering the simple upper bound $\hat \beta_{s,\ell_s} \le \tilde O(\sqrt{T})$ and $\alpha = T^{-3/2}$, the regret for one exploitation round  does not exceed $\tilde O(1/T)$. Consequently, the cumulative  regret is
\begin{align}
    \textstyle{\sum}_{s \in [T]/(\cup_{\ell \in [L]}\Psi_{T+1,\ell})}\Big(2\xb_s^{*\top}\btheta^* - (\xb_s^\top\btheta^*+\yb_s^\top\btheta^*)\Big) \le \tilde O(1).\label{eq:exploitation},
\end{align}
which is a low-order term in total regret.

In the exploration part, the regret is the cumulative regret encountered within each layer. We analyze the low layers and high layers distinctly. For $ \ell \le \ell^* = \Big\lc\log_2\Big(64 (L_\mu /\kappa_\mu)\sqrt{\log(4(T+1)^2L/\delta)}\Big)\Big\rc$, 
the incurred regret can be upper bounded by the number of rounds in this layer
\begin{align}
    \notag
        \textstyle{\sum_{s \in \Psi_{T+1,\ell}}}\big(2\xb_s^{*\top}\btheta^* - (\xb_s^\top\btheta^*+\yb_s^\top\btheta^*)\big) &\leq 4|\Psi_{T+1,\ell}|. 
    \end{align}
Moreover, $|\Psi_{T+1,\ell}|$ can be upper bounded by
\begin{align}
    |\Psi_{T+1,\ell}| \le 2^{2\ell} d \log \big(1+2^{2\ell}AT/d\big) 
    \le O\bigg(\frac{L_\mu^2}{\kappa_\mu^2} d \log \big(1+2^{2\ell^*}AT/d\big) \log\big(4(T+1)^2L/\delta\big)\bigg).
\end{align}
Thus the total regret for layers $\ell \le \ell^*$ is bounded by $\tilde O(d)$. For $\ell > \ell^*$, 
we can bound the cumulative regret incurred in each layer with
\begin{lemma}
     With high probability, for all $\ell \in [L] \setminus \{1\}$, the regret incurred by the index set $\Psi_{T+1,\ell}$ is bounded by
    \begin{align*}
        \textstyle{\sum_{s \in \Psi_{T+1,\ell}}}\Big(2\xb_s^{*\top}\btheta^* - \big(\xb_s^\top\btheta^*+\yb_s^\top\btheta^*\big)\Big) \leq \tilde{O}\Big(d \cdot 2^\ell \hat \beta _{T,\ell-1}\Big).
    \end{align*}
\end{lemma}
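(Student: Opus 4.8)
The plan is to bound the per-layer exploration regret by relating each round's instantaneous regret to the exploration bonus, and then summing via an elliptical-potential argument. First I would fix a layer $\ell \in [L]\setminus\{1\}$ and consider a round $s \in \Psi_{T+1,\ell}$. Since $s$ is an exploration round in layer $\ell$, the algorithm reached layer $\ell$ after passing the elimination test at all layers $1,\dots,\ell-1$; in particular, $\cA_{s,\ell}$ is the surviving set after the elimination in Line~\ref{algo:line11} at layer $\ell-1$, and $\xb_s^*$ (restricted to being in $\cA_{s,\ell}$, which follows by induction on the layers using the concentration inequality~\eqref{eq:concentration} to show the optimal arm is never eliminated) as well as $\xb_s,\yb_s$ all lie in $\cA_{s,\ell}$. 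The plan is to show the instantaneous regret $2\xb_s^{*\top}\btheta^* - (\xb_s^\top\btheta^* + \yb_s^\top\btheta^*)$ is at most $O(2^{-(\ell-1)}\hat\beta_{s,\ell-1})$ plus lower-order terms: each of $\xb_s^*{}^\top\btheta^* - \xb_s^\top\btheta^*$ and $\xb_s^*{}^\top\btheta^* - \yb_s^\top\btheta^*$ is controlled by the Line~\ref{algo:line11} elimination criterion at layer $\ell-1$, which keeps only arms within $2^{-(\ell-1)}\hat\beta_{s,\ell-1}$ of the empirical maximum, combined with the fact that on the good event $|{\xb}^\top(\hat\btheta_{s,\ell-1} - \btheta^*)| \le \|\xb - \xb^*\|_{\hat\bSigma_{s,\ell-1}^{-1}}\hat\beta_{s,\ell-1} \le 2^{-(\ell-1)}\hat\beta_{s,\ell-1}$ (using that the pair $\xb,\yb$ passed the Line~\ref{algo:line10} test $\|\xb-\yb\|_{\hat\bSigma_{s,\ell-1}^{-1}} \le 2^{-(\ell-1)}$ at layer $\ell-1$, and similarly for $\xb^*$). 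Here one must be careful about which norm bound applies to which arm and about the $\kappa_\mu$-vs-Lipschitz conversion between $(\xb-\yb)^\top\btheta^*$ and the $\mu$-transformed quantity; I'd absorb these into the $\tilde O$.

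Next I would sum over $s \in \Psi_{T+1,\ell}$. Since $\hat\beta_{s,\ell-1}$ is (up to logarithmic factors in $s \le T$) monotone and can be replaced by $\hat\beta_{T,\ell-1}$, the sum is at most $O\big(2^{-(\ell-1)}\hat\beta_{T,\ell-1}\big) \cdot |\Psi_{T+1,\ell}|$ up to logs. It then remains to bound $|\Psi_{T+1,\ell}|$. This is the elliptical-potential step: for every exploration round $s$ in layer $\ell$ the algorithm chose $\xb_s,\yb_s$ with $\|\xb_s-\yb_s\|_{\hat\bSigma_{s,\ell}^{-1}} > 2^{-\ell}$ (Line~\ref{algo:line14}) and reweighted so that $w_s^2\|\xb_s-\yb_s\|_{\hat\bSigma_{s,\ell}^{-1}}^2 = 2^{-2\ell}$, i.e. each added rank-one term contributes a fixed amount $2^{-2\ell}$ to the potential $\log\det\hat\bSigma_{T+1,\ell}$. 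Combining the standard determinant-trace inequality with the initialization $\hat\bSigma_{1,\ell} = 2^{-2\ell}\Ib$ and $\|w_s(\xb_s-\yb_s)\|_2 \le 2A$ (so $\mathrm{tr}\,\hat\bSigma_{T+1,\ell} \le 2^{-2\ell}d + 4A^2|\Psi_{T+1,\ell}|$), one gets $|\Psi_{T+1,\ell}| \le 2^{2\ell} d\log(1 + 2^{2\ell}A T/d)$ up to constants, as already asserted in the exploration-regret discussion above. Multiplying, the $2^{2\ell}$ from the count cancels against one factor of $2^{-\ell}$ from $\hat\beta_{s,\ell-1}$'s definition (which scales as $2^{-(\ell-1)}$) leaving a clean $O(d\,2^{\ell}\hat\beta_{T,\ell-1})$ after noting $2^{-(\ell-1)}\cdot 2^{2\ell} = 2^{\ell+1}$; absorbing the $\log(1+2^{2\ell}AT/d)$ into $\tilde O$ gives the claimed bound $\tilde O(d\cdot 2^\ell\hat\beta_{T,\ell-1})$.

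The main obstacle, I expect, is the first step: rigorously establishing the per-round regret bound $O(2^{-(\ell-1)}\hat\beta_{s,\ell-1})$. This requires (i) an induction across layers showing the optimal arm $\xb_s^*$ survives every elimination up to layer $\ell$ — which relies on the concentration inequality~\eqref{eq:concentration} holding simultaneously at all layers on a single good event, and on the elimination threshold in Line~\ref{algo:line11} being generous enough; (ii) correctly handling the symmetric arm selection, since both $\xb_s$ and $\yb_s$ must be shown near-optimal, not just one of them, which is exactly where the symmetric criterion in Line~\ref{algo:line8}/\eqref{eq:max_pair_ucb} differs from the asymmetric \texttt{Sta'D}-style analysis and must be argued directly from membership in $\cA_{s,\ell}$; and (iii) tracking the $\kappa_\mu$ and $L_\mu$ factors from the GLM link function through the confidence bound, since the concentration~\eqref{eq:concentration} is stated in $\hat\bSigma_{s,\ell}$-norm of $\btheta^* - \hat\btheta_{s,\ell}$ but the regret is measured in $\btheta^*$-inner-products. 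The determinant/potential argument in the second step is routine given the fixed per-round potential increment that the weighting scheme enforces. I would also note that the restriction to $\ell \neq 1$ simply avoids the degenerate case where layer $1$'s elimination has no prior layer to inherit from; layer $1$'s contribution is handled separately (and trivially, by $4|\Psi_{T+1,1}|$) in the low-layer part of the proof.
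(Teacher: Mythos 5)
Your proposal is correct and follows essentially the same route as the paper's proof: the optimal arm's survival in $\cA_{s,\ell}$ (established by induction on layers via the concentration event), the per-round bound $O(2^{-\ell+1}\hat\beta_{s,\ell-1})$ obtained by combining the Line~\ref{algo:line11} elimination margin with Cauchy--Schwarz against $\|\hat\btheta_{s,\ell-1}-\btheta^*\|_{\hat\bSigma_{s,\ell-1}}$, and the count $|\Psi_{T+1,\ell}| \le 2^{2\ell}\cdot 2d\log(1+2^{2\ell+2}T/d)$ from the weighting $\|w_s(\xb_s-\yb_s)\|_{\hat\bSigma_{s,\ell}^{-1}}=2^{-\ell}$ plus the elliptical-potential lemma. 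Your worry about a $\kappa_\mu$-vs-$L_\mu$ link-function conversion is unnecessary here, since the regret and the concentration are both already stated in terms of $\btheta^*$-inner products; otherwise the argument matches the paper's.
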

By summing up the regret of all the layers, we can upper bound the total regret for layers $\ell > \ell^*$ as
\begin{align}
    \notag
        \textstyle{\sum_{\ell \in [L]/[\ell^*]}}\sum_{s \in \Psi_{T+1,\ell}}\Big(2\xb_s^{*\top}\btheta^* - \big(\xb_s^\top\btheta^*+\yb_s^\top\btheta^*\big)\Big) \le \tilde O\bigg(\frac{d}{\kappa_\mu}\sqrt{\textstyle{\sum}_{t = 1}^T\sigma_t^2} + \frac{d}{\kappa_\mu}\bigg),
    \end{align}
We can complete the proof of Theorem \ref{main theorem} by combining the regret in different parts together. For the detailed proof, please refer to Appendix \ref{section:proof}.

\section{Experiments}


\paragraph{Experiment Setup.} 
We study the proposed algorithm in simulation to compare it with those that are also designed for contextual dueling bandits. Each experiment instance is simulated for $T=4000$ rounds. 
The unknown parameter $\btheta^*$ to be estimated is generated at random and normalized to be a unit vector. The feature dimension is set to $d = 5$. A total of $|\cA_t|=2^d$ distinct contextual vectors are generated from $\{-1, 1\}^d$. In each round, given the arm pair selected by the algorithm, a response is generated according to the random process defined in \cref{sec:commom-setup}. For each experiment, a total of 128 repeated runs are carried out. {We tune the confidence radius of each algorithm to showcase the best performance.} The average cumulative regret is reported in \cref{fig:exp} along with the standard deviation in the shaded region. The link function $\mu(\cdot)$ is set to be the logistic function. {
Our implementation is publicly available
\footnote{\url{https://github.com/uclaml/VACDB}}
}.

\noindent\textbf{Algorithms.}
We list the algorithms studied in this section as follows:
\begin{itemize}[leftmargin=*]
  \item 
  \texttt{MaxInP}: Maximum Informative Pair by \citet{Saha2021OptimalAF}. It maintains an active set of possible optimal arms each round. The pairs are chosen on the basis of the maximum uncertainty in the difference between the two arms. Instead of using a warm-up period $\tau_0$ in their definition, we initialize $\bSigma_0 = \lambda \Ib $ as regularization. When $\lambda = 0.001$ this approach empirically has no significant impact on regret performance compared to the warm-up method.
  \item 
  \texttt{MaxPairUCB}: In this algorithm, we keep the MLE the same as \texttt{MaxInP}. However, we eliminate the need for an active set of arms, and the pair of arms that is picked is according to the term defined in \eqref{eq:max_pair_ucb}.
  \item
  \texttt{CoLSTIM}: This method is from \citet{bengs2022stochastic}. First, they add randomly disturbed utilities to each arm and pick the arm that has the best estimation. They claim this step achieves better empirical performance. The second arm is chosen according to criteria as defined in \eqref{eq:colstim_sel}. 
   \item 
  \texttt{VACDB}: The proposed variance-aware \cref{main algo} in this paper. $\alpha$ is set to this theoretical value according to \cref{main theorem}. However, we note that for this specific experiment, $L=4$ is enough to eliminate all suboptimal arms. The estimated $\hat\btheta$ in one layer below is used to initialize the MLE of the upper layer when it is first reached to provide a rough estimate since the data is not shared among layers.
\end{itemize}

\begin{wrapfigure}{r}{0.5\textwidth}
  \begin{subfigure}[t]{0.23\textwidth}
    \centering
    \includegraphics[width=1\textwidth]{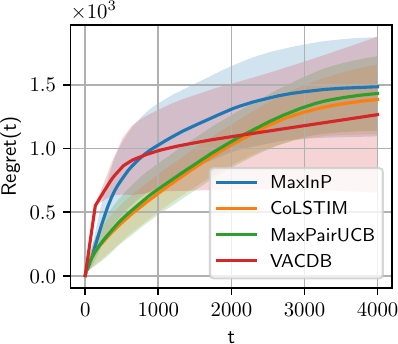}
    \caption{Compare proposed algorithm with baselines.}
    \label{fig:plot1}
  \end{subfigure}
  \begin{subfigure}[t]{0.23\textwidth}
    \centering
    \includegraphics[width=1\textwidth]{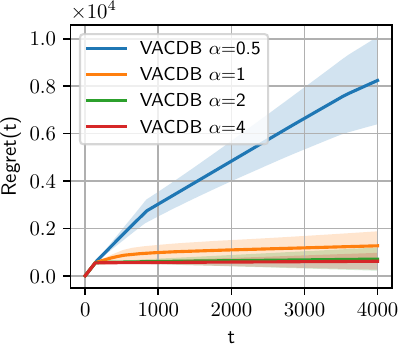}
    \caption{Variance-awareness of the proposed algorithm.}
    \label{fig:plot2}
  \end{subfigure}
  \caption{Experiments showing regret performance in various settings.}
  \label{fig:exp}
\end{wrapfigure}
\noindent\textbf{Regret Comparison.}
In \cref{fig:plot1} we first notice that the proposed method \texttt{VACDB} has a better regret over other methods on average, demonstrating its efficiency. Second, the \texttt{MaxPairUCB} and \texttt{CoLSTIM} algorithm have a slight edge over the \texttt{MaxInP} algorithm empirically, which can be partially explained by the discussion in \cref{section:arm}. The contributing factor for this could be that in \texttt{MaxInP} the chosen pair is solely based on uncertainty, while the other two methods choose at least one arm that maximizes the reward.

\noindent\textbf{Variance-Awareness.}
In \cref{fig:plot2}, we show the variance awareness of our algorithm by scaling the unknown parameter $\btheta^*$. Note that the variance of the Bernoulli distribution with parameter $p$ is $\sigma^2 = p(1-p)$. To generate high- and low-variance instances, we scale the parameter $\btheta^*$ by a ratio of $\alpha \in \{0.5, 1, 2, 4\}$. If $\alpha \ge 1$ then $p$ will be closer to $0$ or $1$ which results in a lower variance instance, and vice versa. In this plot, we show the result under four cases where the scale is set in an increasing manner, which corresponds to reducing the variance of each arm. With decreasing variance, our algorithm suffers less regret, which corresponds to the decrease in the $\sigma_t$ term in our main theorem.

\section{Conclusion}
We introduced a variance-aware method for contextual dueling bandits. An adaptive algorithm called \texttt{VACDB} is proposed. Theoretical analysis shows a regret upper bound depending on the observed variances in each round. The worst-case regret bound matches lower bound. Additionally, we conduct some simulated studies to show that the proposed algorithm reacts to instances with changing variance implied by the regret analysis. 
In the future, one of the possible directions is to consider a subset-wise comparison: In each round, a subset of size $K$ arms can be chosen from all arms, and the agent can only observe the best arm of the chosen subset. The dueling bandits model in this work can be treated as a special case of $K = 2$. Moreover, the preference probability is characterized by a generalized linear model, which may be a strong assumption for some real-world applications. We aim to generalize our results to broader nonlinear function classes, such as the function class with bounded Eluder dimension \citep{russo2013eluder}. 

\section*{Acknowledgements}
We thank the anonymous reviewers and area chair for their helpful comments. QD, YW and QG are supported in part by the NSF grants CIF-1911168 and CPS-2312094. YW is also supported by UCLA Dissertation Year Fellowship. TJ and FF are supported in part by the NSF grant CIF-1908544. The views and conclusions contained in this paper are those of the authors and should not be interpreted as representing any funding agencies. 

\bibliography{ref}
\bibliographystyle{iclr2024_conference}

\newpage

\newpage
\appendix
{
\section{Comparison with Prior Works}
In this section, we provide a detailed discussion of the layered design, drawing a comparison with \texttt{Sta'D} in \citet{Saha2021OptimalAF} and \texttt{SupCoLSTIM} in \citet{bengs2022stochastic}.
The general idea follows \citet{auer2002using}, which focuses on maintaining a set of ``high confidence promising arms''. The algorithm operates differently in two distinct scenarios. If there are some pairs $(\xb_t,\yb_t)$ in the current layer $\ell$ with high uncertainty, represented by $\|\xb_t -\yb_t\|_{\hat \bSigma_{t,\ell}^{-1}}$, we will explore those arm pairs. Conversely, when achieving the desired accuracy, we eliminate suboptimal arms using our confidence set and proceed to a subsequent layer demanding greater accuracy. This process continues until we reach a sufficiently accurate high layer, at which we make decisions based on the remaining arms in the confidence set and the estimated parameters $\hat \btheta_{t,\ell}$.\\
In the final stage, \texttt{Sta'D} picks the first arm $\xb_t$ as the one with the maximum estimated score, followed by choosing its strongest challenger $\yb_t$, which has the highest optimistic opportunity to beat $\xb_t$. \texttt{SupCoLSTIM} adopts a similar policy and distinguishes itself with a
randomized learning strategy by generating additive noise terms from an underlying perturbation distribution. Our arm selection is based on the symmetric arm selection policy described in Section \ref{section:arm}.\\
\texttt{Sta'D} and \texttt{SupCoLSTIM} choose the confidence set radius $\hat \beta_{t,\ell}$ to be $2^{-\ell}$ in the $\ell$-th layer. In comparison, our choice $\hat \beta_{t,\ell}$ is defined in \eqref{bonus}. As we mention in Section \ref{section:adaptive}, apart from the $2^{-\ell}$ dependency on the layer $\ell$, it also relies on the estimated variance. Such a variance-adaptive confidence set radius helps to achieve the variance-aware regret bound.  
}
{
\section{Additional Experiment on Real-world Data}
\begin{wrapfigure}{r}{0.35\textwidth}
    \centering
\includegraphics[width=0.35\textwidth]{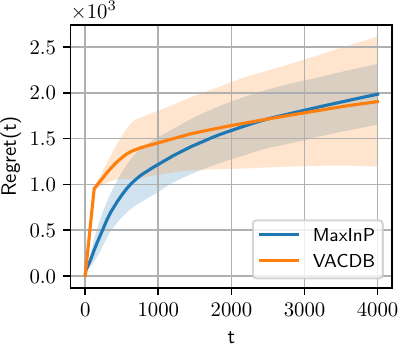}
    \caption{{Regret comparison between \texttt{VACDB} and \texttt{MaxInP} on a real-world dataset.}}
    \label{fig:plot_ds}
\end{wrapfigure}
To showcase the performance of our algorithms in a real-world setting, we use EventTime dataset \citep{Zhang2016CrowdsourcedTA}. In this dataset, $K = 100$ historical events are compared in a pairwise fashion by crowd-sourced workers. The data contains binary response indicating which one of the events the worker thinks precedes the other. There is no side information 
\begin{align*}
    \cA = \{\xb_i, i \in [K]\},
\end{align*}
or the true parameter $\btheta^*$ readily available in the dataset.  Thus, we estimate them with pairwise comparison data. To achieve this, let $C_{ij}, i,j \in [K]$ be the number of times event $j$ precedes event $i$ labeled by the workers. The following MLE is used:
\begin{align*}
 \argmax_{ \{\xb_i\}, \btheta} \sum_{i \in [K]} \sum_{j \in [K]} C_{ij}
\log
\left(
    \sigma((\xb_i - \xb_j)^\top\btheta)
\right).
\end{align*}
With the estimated $\cA$ and $\btheta^*$, it is then possible to simulate the interactive process. We compared our algorithm \texttt{VACDB} with \texttt{MaxInP} in \cref{fig:plot_ds}. We can see that after about $2500$ rounds, our algorithm starts to outperform \texttt{MaxInP} in terms of cumulative regret.
}
\section{Discussion on Arm Selection Policies}\label{appendix:arm selection}
In this section, we present a detailed discussion for \cref{section:arm}. We assume that in round $t$, we have an estimator $\hat 
 \btheta_t$, a covariance matrix $\bSigma_t = \lambda \Ib + \sum_{i=1}^{t-1}(\xb_i-\yb_i)(\xb_i-\yb_i)^\top$ and a concentration inequality with confidence radius $\beta_t$,
\begin{align}
\label{eq:conc}
    \|\hat \btheta_t - \btheta^*\|_{\bSigma_t} \le \beta_t.
\end{align}
The three arm selection methods can be described as follows:
\paragraph{Method 1:}
Following \citet{Saha2021OptimalAF}, let $\cC_t$ be
\begin{align*}
    \cC_t = \{\xb \in \cA_t \mid (\xb-\yb)^\top \hat \btheta_t + \beta_t \|\xb-\yb\|_{\bSigma_t^{-1}} \ge 0, \forall \yb \in \cA_t\}.
\end{align*}
Then $\xb_t^* \in \cC_t$ because for any $\yb \in \cA_t$
\begin{align*}
    (\xb_t^*-\yb)^\top \hat \btheta_t + \beta_t \|\xb_t^*-\yb\|_{\bSigma_t^{-1}} 
    &= (\xb_t^*-\yb)^\top (\hat \btheta_t - \btheta^*) + (\xb_t^*-\yb)^\top \btheta^* + \beta_t \|\xb_t^*-\yb\|_{\bSigma_t^{-1}}\\
    &\ge \beta_t \|\xb_t^*-\yb\|_{\Sigma_t^{-1}} - \|\xb_t^*-\yb\|_{\Sigma_t^{-1}}^\top \|\hat \btheta_t - \btheta^*\|_{\bSigma_t}\\
    &\ge 0,
\end{align*}
where the first inequality holds due to Cauchy-Schwarz inequality and $\xb_t^*$ is the optimal arm in round $t$. The second inequality holds due to \eqref{eq:conc}.

The arms selected in round $t$
 are $\xb_t,\yb_t = \argmax_{\xb,\yb \in \cC_t} \|\xb-\yb\|_{\bSigma_t^{-1}}$
Then the regret in round $t$ can be decomposed as
\begin{align*}
    2r_t &= 2 \xb_t^{*\top} \btheta^* - (\xb_t+\yb_t)^\top \btheta^*\\
    &= (\xb_t^*-\xb_t)^\top \btheta^* + (\xb_t^*-\yb_t)^\top \btheta^*\\
    & = (\xb_t^*-\xb_t)^\top (\btheta^* -\hat \btheta_t) + (\xb_t^*-\xb_t)^\top \hat \btheta_t + (\xb_t^*-\yb_t)^\top (\btheta^* -\hat \btheta_t) + (\xb_t^*-\yb_t)^\top \hat \btheta_t\\
      & \le (\xb_t^*-\xb_t)^\top (\btheta^* -\hat \btheta_t) + \beta_t\|\xb_t^*-\xb_t\|_{\bSigma_t^{-1}} + (\xb_t^*-\yb_t)^\top (\btheta^* -\hat \btheta_t) + \beta_t\|\xb_t^*-\yb_t\|_{\bSigma_t^{-1}} \\
    & \le \|\xb_t^*-\xb_t\|_{\bSigma_t^{-1}} \|\btheta^* -\hat \btheta_t\|_{\bSigma_t} + \beta_t\|\xb_t^*-\xb_t\|_{\bSigma_t^{-1}} \\
    & \qquad +\|\xb_t^*-\yb_t\|_{\bSigma_t^{-1}} \|\btheta^* -\hat \btheta_t\|_{\bSigma_t} + \beta_t\|\xb_t^*-\yb_t\|_{\bSigma_t^{-1}}\\
    & \le 2\beta_t\|\xb_t^*-\xb_t\|_{\bSigma_t^{-1}} + 2\beta_t\|\xb_t^*-\yb_t\|_{\bSigma_t^{-1}}\\
    & \le 4 \beta_t\|\xb_t-\yb_t\|_{\bSigma_t^{-1}} ,
\end{align*}
where the first inequality holds because the choice $\xb_t,\yb_t \in \cC_t$. The second inequality holds due to Cauchy-Schwarz inequality. The third inequality holds due to \eqref{eq:conc}. The last inequality holds due to $\xb_t^* \in \cC_t, \xb_t,\yb_t = \argmax_{\xb,\yb \in \cC_t} \|\xb-\yb\|_{\bSigma_t^{-1}}$.
\paragraph{Method 2:}
Following \citet{bengs2022stochastic}, we choose the first arm as 
\begin{align*}
    \xb_t = \argmax_{\xb \in \cA_t} \xb^\top \hat\btheta_t.
\end{align*}
Then choose the second arm as
\begin{align*}
    \yb_t = \argmax_{\yb \in \cA_t} \yb^\top \hat\btheta_t + 2\beta_t\|\xb_t-\yb\|_{\bSigma_t^{-1}},
\end{align*}
The regret in round $t$ can be decomposed as
\begin{align*}
    2r_t &= 2 \xb_t^{*\top} \btheta^* - (\xb_t+\yb_t)^\top \btheta^*\\
    &= 2(\xb_t^*-\xb_t)^\top \btheta^* + (\xb_t-\yb_t)^\top \btheta^*\\
    & = 2(\xb_t^*-\xb_t)^\top (\btheta^* -\hat \btheta_t) + 2(\xb_t^*-\xb_t)^\top \hat \btheta_t + (\xb_t-\yb_t)^\top (\btheta^* -\hat \btheta_t) + (\xb_t-\yb_t)^\top \hat \btheta_t\\
      & \le 2\|\xb_t^*-\xb_t\|_{\bSigma_t^{-1}}\|\btheta^* -\hat \btheta_t\|_{\bSigma_t} + (\xb_t^*-\xb_t)^\top \hat \btheta_t \\
    & \qquad + \|\xb_t-\yb_t\|_{\bSigma_t^{-1}}\|\btheta^* -\hat \btheta_t\|_{\bSigma_t} +  (\xb_t-\yb_t)^\top \hat \btheta_t \\
    &\le 2\beta_t\|\xb_t^*-\xb_t\|_{\bSigma_t^{-1}} + (\xb_t^*-\yb_t)^\top \hat \btheta_t + \beta_t\|\xb_t-\yb_t\|_{\bSigma_t^{-1}}\\
    & \le \yb_t^\top \hat \btheta_t + 2\beta_t\|\xb_t-\yb_t\|_{\bSigma_t^{-1}} - \xb_t^{*\top} \hat \btheta_t + (\xb_t^*-\yb_t)^\top \hat \btheta_t + \beta_t\|\xb_t-\yb_t\|_{\bSigma_t^{-1}}\\
    & = 3\beta_t\|\xb_t-\yb_t\|_{\bSigma_t^{-1}},
\end{align*}
where the first inequality holds due to the Cauchy-Schwarz inequality and $\xb_t^\top \hat\btheta_t \ge \xb_t^* \hat \btheta_t$. The second inequality holds due to the Cauchy-Schwarz inequality. The third inequality holds due to $\yb_t = \argmax_{\yb \in \cA_t} \yb^\top \hat\btheta_t + 2\beta_t\|\xb_t-\yb\|_{\Sigma_t^{-1}}$.
\paragraph{Method 3:}
In this method, we choose two arms as
\begin{align}\label{arm select}
    \xb_t,\yb_t = \argmax_{\xb,\yb \in \cA_t}  \left[ (\xb+\yb)^\top\hat \btheta_{t} + \beta_t\|\xb-\yb\|_{\hat \bSigma_{t}^{-1}}\right]
\end{align}
Then the regret can be decomposed as
\begin{align*}
    2r_t &= 2 \xb_t^{*\top} \btheta^* - (\xb_t+\yb_t)^\top \btheta^*\\
    &= (\xb_t^*-\xb_t)^\top \btheta^* + (\xb_t^*-\yb_t)^\top \btheta^*\\
    & = (\xb_t^*-\xb_t)^\top (\btheta^* -\hat \btheta_t) + (\xb_t^*-\yb_t)^\top (\btheta^* -\hat \btheta_t) + (2\xb_t^*-\xb_t-\yb_t)^\top \hat \btheta_t\\
    & \le \|\xb_t^*-\xb_t\|_{\bSigma_t^{-1}} \|\btheta^* -\hat \btheta_t\|_{\bSigma_t}
    + \|\xb_t^*-\yb_t\|_{\bSigma_t^{-1}} \|\btheta^* -\hat \btheta_t\|_{\bSigma_t} + (2\xb_t^*-\xb_t-\yb_t)^\top \hat \btheta_t\\
    & \le \beta_t\|\xb_t^*-\xb_t\|_{\bSigma_t^{-1}} + \beta_t\|\xb_t^*-\yb_t\|_{\bSigma_t^{-1}} + (2\xb_t^*-\xb_t-\yb_t)^\top \hat \btheta_t,
\end{align*}
where the first inequality holds due to the Cauchy-Schwarz inequality. The second inequality holds due to \eqref{eq:conc}.
Using \eqref{arm select}, we have
\begin{align*}
(\xb_t^*+\xb_t)^\top \hat \btheta_{t} + \beta_t\|\xb_t^*-\xb_t\|_{\hat \bSigma_{t}^{-1}} &\le (\xb_t+\yb_t)^\top \hat \btheta_{t} + \beta_t\|\xb_t-\yb_t\|_{\hat \bSigma_{t}^{-1}}\\
    (\xb_t^*+\yb_t)^\top \hat \btheta_{t,\ell} + \beta_t\|\xb_t^*-\yb_t\|_{\hat \bSigma_{t}^{-1}} &\le (\xb_t+\yb_t)^\top \hat \btheta_{t} + \beta_t\|\xb_t-\yb_t\|_{\hat \bSigma_{t}^{-1}}.
\end{align*}
Adding the above two inequalities, we have
\begin{align*}
    \beta_t\|\xb_t^*-\xb_t\|_{\bSigma_t^{-1}} + \beta_t\|\xb_t^*-\yb_t\|_{\bSigma_t^{-1}} \le (\xb_t+\yb_t - 2\xb_t^*)^\top \hat \btheta_t + 2\beta_t\|\xb_t-\yb_t\|_{\hat \bSigma_{t}^{-1}}.
\end{align*}
Therefore, we prove that the regret can be upper bounded by
\begin{align*}
    2r_t \le 2\beta_t\|\xb_t-\yb_t\|_{\hat \bSigma_{t}^{-1}}.
\end{align*}

In conclusion, we can prove similar inequalities for the above three arm selection policies. To get an upper bound of regret, we can sum up the instantaneous regret in each round and use Lemma \ref{lemma:abbasi} to obtain the final result.
\section{A Rigorous Proof for the MLE}\label{section:weakness}
\subsection{Discussion on the Weakness}
In the proof of Lemma \ref{lemma:MLE}, for completeness, we need to prove that \eqref{eq:MLE} has a  unique solution. Following \citet{li2017provably}, we define a auxiliary function $G: \RR^d \rightarrow \RR^d$ as
\begin{align*}
    G(\btheta) = \lambda \btheta + \sum_{s }w_s^2\left[\mu\left((\xb_s-\yb_s)^\top \btheta\right) - \mu\left((\xb_s-\yb_s)^\top \btheta^*\right)\right](\xb_s-\yb_s).
\end{align*}
Using the condition that the minimum eigenvalue of the covariance matrix is strictly positive, we can prove that $G$ is injective and $\hat \btheta$ is the solution of \eqref{eq:MLE} is equivalent to $G(\hat \btheta) = Z$, where $Z$ is a quantity dependent on the stochastic noise. In \citet{li2017provably}, there is a minor weakness in asserting the existence and uniqueness of the solution with $\hat{\btheta} = G^{-1}(Z)$, without confirming whether $Z$ lies in the range of $G$. We solve this problem with the classical Brouwer invariance of domain theorem in algebraic topology:
\begin{theorem}[\citealt{brouwer1911beweis}]Let $U$ be an open subset of $\RR^d$, and let $f: U \rightarrow  \RR^d$ be a continuous injective map. Then $f(U)$ is also open.
\end{theorem}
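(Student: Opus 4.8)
The plan is to prove the theorem by reducing it to a purely local, homological statement on the sphere $S^d$, realized as the one-point compactification $S^d = \RR^d \cup \{\infty\}$. Since it suffices to show that every point of $f(U)$ is interior, I would fix $x_0 \in U$ and choose a closed ball $\bar B = \bar B(x_0,r) \subseteq U$, with boundary $\partial B \cong S^{d-1}$ and interior $\mathring B$. As $\bar B$ is compact and $f$ is a continuous injection, $f|_{\bar B}$ is a homeomorphism onto its image, so $f(\bar B) \cong D^d$ and $f(\partial B) \cong S^{d-1}$, while $f(\mathring B)$ is connected and disjoint from $f(\partial B)$. Working inside $S^d$ (noting $\infty \notin f(\bar B) \subseteq \RR^d$), the goal becomes to show that $f(\mathring B)$ is open in $S^d$, which makes $f(x_0)$ interior to $f(U)$.

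The technical core will be two separation statements, both established by induction on the cell/sphere dimension using Mayer--Vietoris in reduced singular homology $\tilde H_*$. (a) If $D \subseteq S^d$ is homeomorphic to the closed $k$-cell $D^k$ with $0 \le k \le d$, then $\tilde H_i(S^d \setminus D) = 0$ for all $i$. (b) If $\Sigma \subseteq S^d$ is homeomorphic to $S^k$ with $0 \le k \le d-1$, then $\tilde H_i(S^d \setminus \Sigma) = \mathbb{Z}$ for $i = d-k-1$ and $0$ otherwise. For (a) I would model $D^k$ by the cube $I^k$, bisect the last coordinate, and apply Mayer--Vietoris to $A = S^d \setminus h(I^{k-1}\times[0,\tfrac12])$ and $B = S^d \setminus h(I^{k-1}\times[\tfrac12,1])$: the inductive hypothesis kills $\tilde H_*(A \cup B)$, forcing $\tilde H_*(A \cap B) = \tilde H_*(S^d \setminus D)$ to inject into $\tilde H_*(A) \oplus \tilde H_*(B)$, after which a nested-bisection argument eliminates any nonzero class. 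For (b) I would split $S^k$ into two hemispherical $k$-cells meeting along an equatorial $S^{k-1}$ and combine Mayer--Vietoris with (a).

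With (a) and (b) available, the assembly is short. Applying (b) with $k = d-1$ to $f(\partial B)$ gives $\tilde H_0(S^d \setminus f(\partial B)) = \mathbb{Z}$, so $S^d \setminus f(\partial B)$ has exactly two connected components, both open since $S^d$ is locally connected. Applying (a) with $k = d$ to $f(\bar B)$ gives $\tilde H_0(S^d \setminus f(\bar B)) = 0$, so $S^d \setminus f(\bar B)$ is connected and nonempty (it contains $\infty$). Since injectivity yields $f(\bar B) = f(\partial B) \sqcup f(\mathring B)$, we get the disjoint decomposition $S^d \setminus f(\partial B) = f(\mathring B) \sqcup (S^d \setminus f(\bar B))$ into two connected pieces; as the ambient set has exactly two components, each piece is precisely one component. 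Hence $f(\mathring B)$ is an open component, $f(x_0)$ is interior to $f(U)$, and as $x_0$ was arbitrary, $f(U)$ is open.

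The main obstacle will be the inductive proof of (a), specifically the step ruling out a surviving nonzero class: a class nonzero on $S^d \setminus h(I^k)$ would remain nonzero on $S^d \setminus h(J)$ for one of the two halved subintervals $J$, and iterating produces a nested sequence of subcubes shrinking to a $(k-1)$-cell whose complement, handled by the lower-dimensional case, supports no such class --- a contradiction drawn through the direct-limit description of $\tilde H_*$ of an increasing union of open sets. Pinning down this limiting argument, together with the exactness bookkeeping in Mayer--Vietoris and the clean identification $f(\mathring B) \sqcup (S^d \setminus f(\bar B)) = S^d \setminus f(\partial B)$, is where the real content lies; the reduction in the first paragraph and the component-counting in the third are formal.
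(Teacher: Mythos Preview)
Your outline is the standard homological proof of invariance of domain (essentially the treatment one finds in Hatcher, Section~2.B), and as far as it goes it is correct: the two separation lemmas (a) and (b) are the usual generalized Jordan--Brouwer statements, the Mayer--Vietoris induction with nested bisection is the right mechanism, and the final component-counting argument cleanly yields the conclusion.

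However, there is nothing to compare it against. The paper does not prove this theorem at all: it merely \emph{cites} Brouwer's 1911 result as a black-box auxiliary tool (it appears in the Auxiliary Lemmas appendix with no proof, and is invoked in Section~\ref{section:weakness} solely to argue that $G(\RR^d)$ is open). The paper's original contribution in that section is the application --- showing $G$ is injective, then using invariance of domain plus a closedness argument to conclude $G(\RR^d)=\RR^d$ --- not a proof of the topological theorem itself. So while your proposal is a perfectly valid sketch of a proof, it goes well beyond what the paper does or needs; the paper simply takes invariance of domain as classical.
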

We complete the proof by proving $G(\RR^d)$ is both open and closed and therefore \eqref{eq:MLE} has a unique solution.

\subsection{A detailed proof}
We will prove that the function $G$ is a bijection from $\RR^d$ to $\RR^d$. 
    We first show it's injective. The proof idea is similar to Theorem 1 in \citet{li2017provably}. With the mean value theorem, for any $\btheta_1, \btheta_2 \in \RR^d$, there exists $m \in [0,1]$ and $\bar \btheta = m\btheta_1 + (1-m) \btheta_2$, such that the following equation holds,
    \begin{align*}
        &G(\btheta_1) - G(\btheta_2)\\
        & \qquad = \lambda(\btheta_1 - \btheta_2) + \sum_{s}w_s^2\left[\mu\left((\xb_s-\yb_s)^\top \btheta_1\right) - \mu\left((\xb_s-\yb_s)^\top \btheta_2\right)\right](\xb_s-\yb_s)\\
        &\qquad  = \left[ \lambda\Ib + \sum_{s }w_s^2\dot\mu\left((\xb_s-\yb_s)^\top \bar\btheta\right)(\xb_s-\yb_s)(\xb_s-\yb_s)^\top\right](\btheta_1 - \btheta_2).
    \end{align*}
    We define $F(\bar \btheta)$ as
    \begin{align*}
    F(\bar \btheta) = \left[\lambda \Ib + \sum_{s } w_s^2\dot\mu\left((\xb_s-\yb_s)^\top \bar\btheta\right)(\xb_s-\yb_s)(\xb_s-\yb_s)^\top\right].
    \end{align*}
    Using $\dot \mu(\cdot) \ge \kappa_\mu > 0$ and $\inf_{s }w_s^2 > 0$, we have $F(\bar \btheta)$ is positive definite. Therefore, we prove that when $\btheta_1 \neq \btheta_2$, $G_{t,\ell}(\btheta_1) \neq G_{t,\ell}(\btheta_2)$. That is to say, $G_{t,\ell}$ is an injection from $\RR^d$ to $\RR^d$. 
    
    Next, we prove $G$ is surjective. The classical Brouwer invariance of domain theorem (Theorem \ref{thm:brouwer}) in algebraic topology indicates that $G$ is an open map, and thus $G(\RR^d)$ is an open set. On the other hand, the minimum eigenvalue of $F(\bar \btheta)$ is strictly positive. Therefore, $F(\bar \btheta)$ is invertible, and we have
    \begin{align}\label{eq:002}
        \btheta_1 - \btheta_2 = F(\bar \btheta)^{-1} \left[G_{t,\ell}(\btheta_1)-G_{t,\ell}(\btheta_2)\right].
    \end{align}
    Let $\{G_{t,\ell}(\btheta_i)\}_{i=1}^\infty$ be a Cauchy sequence in $G(\RR^d)$. Using \eqref{eq:002} and the fact that $\lambda_{\text{min}}(F(\bar \btheta)) \ge \lambda > 0$, we have for any $m > n$,
    \begin{align*}
        \|\btheta_m-\btheta_n\|_2 \leq \frac{1}{\lambda}\|G( \btheta_{m}) - G( \btheta_n)\|_2.
    \end{align*}
    This inequality shows that $\{\btheta_i\}_{i=1}^\infty$ is also a Cauchy sequence. With the completeness of the space $\RR^d$, the limit $\lim_{i \rightarrow \infty} \btheta_i = \btheta$ exists. By the continuity of the function $G$, we have
    \begin{align*}
        \lim_{i \rightarrow \infty} G(\btheta_i) = G(\btheta) \in G(\RR^d).
    \end{align*}
    Therefore, $G(\RR^d)$ is also closed. We have proved that $G(\RR^d)$ is both open and closed. Using $\RR^d$ is connected, we have proved that $G(\RR^d) = \RR^d$, i.e. $G_{t,\ell}$ is subjective. 
    
    In conclusion, the function $G$ is invertible, and \eqref{eq:MLE} has a unique solution.

\section{Proof of Theorem \ref{main theorem}}\label{section:proof}
In this section, we assume \eqref{eq:MLE} has a unique solution $\hat \btheta_{t+1,\ell}$, which is essential in our analysis. A detailed discussion is in Section \ref{section:weakness}.

We first need the concentration inequality for the MLE.
\begin{lemma}\label{lemma:MLE}
   With probability at least $1-\delta$, the following concentration inequality holds for all round $t \ge 2$ and layer $\ell \in [L]$ simultaneously:
    \begin{align*}
        \left\|\hat \btheta_{t,\ell} - \btheta^*\right\|_{\hat \bSigma_{t,\ell}} \leq  \frac{2^{-\ell}}{\kappa_\mu}\left[16  \sqrt{\sum_{s \in \bPsi_{t,\ell}}w_s^2\sigma_s^2 \log(4t^2L/\delta)} + 6 \log(4t^2L/\delta)\right] + 2^{-\ell}.
    \end{align*}
\end{lemma}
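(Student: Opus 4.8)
The plan is to convert the equation defining the regularized MLE $\hat\btheta_{t,\ell}$ into a linear identity for $\hat\btheta_{t,\ell}-\btheta^*$, reduce the weighted error $\|\hat\btheta_{t,\ell}-\btheta^*\|_{\hat\bSigma_{t,\ell}}$ to a single self-normalized stochastic sum plus a deterministic regularization term, and then invoke the Freedman-type martingale inequality of \citet{zhao2023variance}. The point of the weight choice $w_s=2^{-\ell}/\|\xb_s-\yb_s\|_{\hat\bSigma_{s,\ell}^{-1}}$ is that it forces the ``feature norms'' entering that inequality to all equal $2^{-\ell}$, which is exactly what produces both the $2^{-\ell}$ prefactor and a bound free of any $\sqrt d$ factor.

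Starting from the equation defining $\hat\btheta_{t,\ell}$ (cf.\ \eqref{eq:MLE} with $\lambda=2^{-2\ell}\kappa_\mu$ and index set $\bPsi_{t,\ell}$), I would subtract the same expression with each $o_s$ replaced by $\mu((\xb_s-\yb_s)^\top\btheta^*)$ and use $\epsilon_s=o_s-\mu((\xb_s-\yb_s)^\top\btheta^*)$ to obtain
\begin{align*}
2^{-2\ell}\kappa_\mu\hat\btheta_{t,\ell}&+\sum_{s\in\bPsi_{t,\ell}}w_s^2\big(\mu((\xb_s-\yb_s)^\top\hat\btheta_{t,\ell})-\mu((\xb_s-\yb_s)^\top\btheta^*)\big)(\xb_s-\yb_s)\\
&=\sum_{s\in\bPsi_{t,\ell}}w_s^2\epsilon_s(\xb_s-\yb_s).
\end{align*}
Applying the one-dimensional mean value theorem term by term to $c\mapsto\mu\big((\xb_s-\yb_s)^\top(\btheta^*+c(\hat\btheta_{t,\ell}-\btheta^*))\big)$ rewrites the left side as $\mathbf{F}_{t,\ell}(\hat\btheta_{t,\ell}-\btheta^*)-2^{-2\ell}\kappa_\mu\btheta^*$, where $\mathbf{F}_{t,\ell}=2^{-2\ell}\kappa_\mu\Ib+\sum_{s\in\bPsi_{t,\ell}}w_s^2\dot\mu((\xb_s-\yb_s)^\top\bar\btheta_s)(\xb_s-\yb_s)(\xb_s-\yb_s)^\top$ for some $\bar\btheta_s$ on the segment between $\btheta^*$ and $\hat\btheta_{t,\ell}$. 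Since $\dot\mu\ge\kappa_\mu$ (Assumption~\ref{assume:kappa}), the regularizer $2^{-2\ell}\kappa_\mu\Ib$ is calibrated exactly so that $\mathbf{F}_{t,\ell}\succeq\kappa_\mu\hat\bSigma_{t,\ell}$ (only the lower bound on $\dot\mu$ is needed). From this and $\hat\btheta_{t,\ell}-\btheta^*=\mathbf{F}_{t,\ell}^{-1}\big(\sum_{s\in\bPsi_{t,\ell}} w_s^2\epsilon_s(\xb_s-\yb_s)-2^{-2\ell}\kappa_\mu\btheta^*\big)$, routine operator-inequality manipulations give $\|\hat\btheta_{t,\ell}-\btheta^*\|_{\hat\bSigma_{t,\ell}}\le\kappa_\mu^{-1}\big\|\sum_{s\in\bPsi_{t,\ell}} w_s^2\epsilon_s(\xb_s-\yb_s)-2^{-2\ell}\kappa_\mu\btheta^*\big\|_{\hat\bSigma_{t,\ell}^{-1}}$. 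Since $\hat\bSigma_{t,\ell}\succeq 2^{-2\ell}\Ib$ and $\|\btheta^*\|_2\le1$ (Assumption~\ref{assume:x-bounded}), the $2^{-2\ell}\kappa_\mu\btheta^*$ term contributes at most $\kappa_\mu^{-1}\cdot 2^{-2\ell}\kappa_\mu\cdot 2^{\ell}=2^{-\ell}$, exactly the additive $2^{-\ell}$ in the statement, so it remains only to bound $\kappa_\mu^{-1}\big\|\sum_{s\in\bPsi_{t,\ell}}w_s^2\epsilon_s(\xb_s-\yb_s)\big\|_{\hat\bSigma_{t,\ell}^{-1}}$.

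For this stochastic term I would write $\mathbf{a}_s:=w_s(\xb_s-\yb_s)$ and $\xi_s:=w_s\epsilon_s$, so that $\hat\bSigma_{t,\ell}=2^{-2\ell}\Ib+\sum_{s\in\bPsi_{t,\ell}}\mathbf{a}_s\mathbf{a}_s^\top$ is an ordinary ridge design matrix for the $\mathbf{a}_s$'s and $\sum_{s\in\bPsi_{t,\ell}}w_s^2\epsilon_s(\xb_s-\yb_s)=\sum_{s\in\bPsi_{t,\ell}}\xi_s\mathbf{a}_s$. The definition of $w_s$ yields the uniform identity $\|\mathbf{a}_s\|_{\hat\bSigma_{s,\ell}^{-1}}=2^{-\ell}$ for every $s\in\bPsi_{t,\ell}$, where $\hat\bSigma_{s,\ell}$ is the running matrix that does not yet contain $s$; and since the pair $(\xb_s,\yb_s)$, hence $w_s$ and $\mathbf{a}_s$, is determined before $o_s$ is revealed, $\{\xi_s\}$ is a martingale difference sequence with respect to the history $\mathcal{F}_{s-1}$ (collecting $\xb_{1:s},\yb_{1:s},\epsilon_{1:s-1}$), with $|\xi_s|\le w_s\le1$ (using $\|\xb_s-\yb_s\|_{\hat\bSigma_{s,\ell}^{-1}}>2^{-\ell}$ on the exploration branch) and $\EE[\xi_s^2\mid\mathcal{F}_{s-1}]=w_s^2\sigma_s^2$. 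Applying the Freedman-type self-normalized martingale inequality of \citet{zhao2023variance} with range bound $1$, predictable variances $w_s^2\sigma_s^2$, and the uniform feature-norm bound $2^{-\ell}$, together with a union bound over $\ell\in[L]$ and over $t$ at confidence level $\Theta(\delta/(t^2L))$, should yield, with probability at least $1-\delta$, simultaneously for all $t\ge2$ and $\ell\in[L]$,
\begin{align*}
\Big\|\sum_{s\in\bPsi_{t,\ell}}\xi_s\mathbf{a}_s\Big\|_{\hat\bSigma_{t,\ell}^{-1}}\le 2^{-\ell}\Big(16\sqrt{\sum_{s\in\bPsi_{t,\ell}}w_s^2\sigma_s^2\log(4t^2L/\delta)}+6\log(4t^2L/\delta)\Big),
\end{align*}
and combining this with the previous paragraph proves the lemma.

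The hard part is this last invocation: the sum must be self-normalized by the \emph{final} covariance $\hat\bSigma_{t,\ell}$, whereas only the \emph{running} covariances $\hat\bSigma_{s,\ell}$ are directly controlled, and the resulting bound has to come out dimension-free; both are precisely what the Freedman-type machinery of \citet{zhao2023variance} delivers once the uniform identity $\|\mathbf{a}_s\|_{\hat\bSigma_{s,\ell}^{-1}}=2^{-\ell}$ is established, and tracking its explicit constants carefully is what produces the $16$ and the $6$. A secondary point, which I would cite rather than reprove, is that the whole derivation presupposes that the defining equation has a unique solution $\hat\btheta_{t,\ell}$: the regularizer only guarantees that $\mathbf{F}_{t,\ell}$ has a strictly positive smallest eigenvalue rather than an a priori bounded solution domain, so existence and uniqueness are handled separately via the Brouwer invariance-of-domain argument of Section~\ref{section:weakness}.
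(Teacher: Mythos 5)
Your proposal is correct and follows essentially the same route as the paper's proof: linearize the regularized MLE equation via the mean value theorem, use $\dot\mu\ge\kappa_\mu$ to get $\mathbf{F}_{t,\ell}\succeq\kappa_\mu\hat\bSigma_{t,\ell}$, bound the regularization term by $2^{-\ell}$ via $\|\btheta^*\|_2\le 1$ and $\hat\bSigma_{t,\ell}\succeq 2^{-2\ell}\Ib$, and apply the Freedman-type self-normalized bound of \citet{zhao2023variance} to $\sum_s w_s\epsilon_s\cdot w_s(\xb_s-\yb_s)$ using the key identity $\|w_s(\xb_s-\yb_s)\|_{\hat\bSigma_{s,\ell}^{-1}}=2^{-\ell}$ and a union bound over layers. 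The only blemish is a sign slip in the sentence rewriting the left side as $\mathbf{F}_{t,\ell}(\hat\btheta_{t,\ell}-\btheta^*)-2^{-2\ell}\kappa_\mu\btheta^*$ (it should be $+2^{-2\ell}\kappa_\mu\btheta^*$), which your subsequent displayed identity already corrects and which does not affect the bound.
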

With this lemma, we have the following event holds with high probability:
\begin{align*}
    \cE = \left\{
        \left\|\hat \btheta_{t,\ell} - \btheta^*\right\|_{\hat \bSigma_{t,\ell}} \leq  \frac{2^{-\ell}}{\kappa_\mu}\left[16  \sqrt{\sum_{s \in \bPsi_{t,\ell}}w_s^2\sigma_s^2 \log(4t^2L/\delta)} + 6 \log(4t^2L/\delta)\right] + 2^{-\ell}
    \text{ for all } t, \ell\right\}.
\end{align*}
Lemma \ref{lemma:MLE} shows that $\PP[\cE] \ge 1 - \delta$.
For our choice of $\hat\beta_{t,\ell}$ defined in \eqref{bonus}, we define the following event:
\begin{align*}
    \cE^{\text{bonus}} = \left\{\hat \beta_{t,\ell} \ge \frac{2^{-\ell}}{\kappa}\left[16  \sqrt{\sum_{s \in \bPsi_{t,\ell}}w_s^2\sigma_s^2 \log(4t^2L/\delta)} + 6 \log(4t^2L/\delta) \right] + 2^{-\ell}, \text{ for all } t,\ell\right\}.
\end{align*}
The following two lemmas show that the event $\cE^{\text{bonus}}_{\ell}$ holds with high probability.
\begin{lemma}\label{lemma:sigma}
With probability at least $1-\delta$, for all $t \ge 2$, $\ell \in [L]$, the following two inequalties hold simultaneously.
\begin{align*}
    \sum_{s \in \Psi_{t,\ell}}w_s^2\sigma_s^2 &\leq 2 \sum_{s \in \Psi_{t,\ell}}w_s^2\epsilon_s^2 + \frac{14}{3} \log(4t^2L/\delta).\\
    \sum_{s \in \Psi_{t,\ell}}w_s^2\epsilon_s^2 &\leq \frac{3}{2} \sum_{s \in \Psi_{t,\ell}}w_s^2\sigma_s^2 + \frac{7}{3} \log(4t^2L/\delta).
\end{align*}

\end{lemma}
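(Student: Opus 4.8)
The plan is to obtain both inequalities from a single Freedman-type (Bernstein) concentration bound for the martingale formed by comparing each weighted squared noise $w_s^2\epsilon_s^2$ with its conditional mean $w_s^2\sigma_s^2$. I would fix a layer $\ell$ and work with the filtration $\cF_{s-1} = \sigma(\xb_{1:s},\yb_{1:s},\epsilon_{1:s-1})$; under it the features $\xb_s,\yb_s$, the matrix $\hat\bSigma_{s,\ell}$, the weight $w_s$, and the $0/1$ variable $\ind\{s\in\Psi_{T+1,\ell}\}$ marking whether round $s$ is an exploration step assigned to layer $\ell$ are all $\cF_{s-1}$-measurable, since in Algorithm~\ref{main algo} the arm pair and its layer (hence $w_t$) are determined before the feedback $o_t$ is sampled. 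Only $\epsilon_s = o_s - p_s$ is new at step $s$, and the model gives $\EE[\epsilon_s\mid\cF_{s-1}] = 0$ and $\EE[\epsilon_s^2\mid\cF_{s-1}] = \sigma_s^2$.

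Set $X_s = \ind\{s\in\Psi_{T+1,\ell}\}\,w_s^2(\epsilon_s^2 - \sigma_s^2)$, a martingale difference sequence for $\{\cF_s\}$. Since $o_s\in\{0,1\}$ gives $\epsilon_s^2\in[0,1]$ and $\sigma_s^2\in[0,1/4]$, and since every exploration step satisfies $\|\xb_s-\yb_s\|_{\hat\bSigma_{s,\ell}^{-1}}>2^{-\ell}$ so that $w_s = 2^{-\ell}/\|\xb_s-\yb_s\|_{\hat\bSigma_{s,\ell}^{-1}}\le 1$, we get $|X_s|\le 1$. For the predictable quadratic variation, $\epsilon_s^2\le 1$ gives $\EE[X_s^2\mid\cF_{s-1}] \le w_s^4\,\EE[\epsilon_s^4\mid\cF_{s-1}] \le w_s^4\sigma_s^2 \le w_s^2\sigma_s^2$, hence $\sum_{s\in\Psi_{t,\ell}}\EE[X_s^2\mid\cF_{s-1}] \le \sum_{s\in\Psi_{t,\ell}}w_s^2\sigma_s^2$.

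I would then apply the anytime Freedman-type inequality of \citet{zhao2023variance} with a union bound over $\ell\in[L]$ and $t\ge 2$ (the series $\sum_t t^{-2}$ being finite is exactly what lets the $\log(4t^2L/\delta)$ factor close the union bound), to conclude that with probability at least $1-\delta$, for all $t\ge2$ and $\ell\in[L]$ the deviation $\big|\sum_{s\in\Psi_{t,\ell}}w_s^2\epsilon_s^2 - \sum_{s\in\Psi_{t,\ell}}w_s^2\sigma_s^2\big|$ is bounded by $C_1\sqrt{\big(\sum_{s\in\Psi_{t,\ell}}w_s^2\sigma_s^2\big)\log(4t^2L/\delta)} + C_2\log(4t^2L/\delta)$ for absolute constants $C_1,C_2$. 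Writing $S = \sum_{s\in\Psi_{t,\ell}}w_s^2\sigma_s^2$, both claims then follow from AM--GM: $\sqrt{aS}\le \tfrac12 S + \tfrac{a}{2}$ applied to the upper tail gives $\sum w_s^2\epsilon_s^2 \le \tfrac32 S + O(\log)$, and applied to the lower tail and then solving the resulting linear inequality in $S$ gives $S \le 2\sum w_s^2\epsilon_s^2 + O(\log)$; keeping the explicit constants from the cited inequality produces the stated $7/3$ and $14/3$. (For $t=1$ the index sets are empty and the statement is vacuous.)

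The step I expect to be the main obstacle — really the only place needing care — is the bookkeeping that makes $\{X_s\}$ a genuine $\{\cF_s\}$-martingale difference sequence together with the uniformity over all $t$: one must check that the event ``round $s$ is eventually added to layer $\ell$'' is $\cF_{s-1}$-measurable (it is, because the exploitation/elimination/exploration branch taken at round $s$ depends only on $\cA_s$ and the already-built covariance matrices, not on $o_s$), and one must use a time-uniform Freedman inequality rather than a fixed-horizon one. The remaining ingredients — $\EE[\epsilon_s^4]\le\EE[\epsilon_s^2]$, $w_s\le 1$, and the two AM--GM manipulations — are routine.
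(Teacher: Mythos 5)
Your proposal matches the paper's proof: both apply a scalar Freedman/Bernstein bound to the martingale differences $w_s^2(\epsilon_s^2-\sigma_s^2)$ (with $|X_s|\le 1$ and predictable variance at most $\sum_{s}w_s^2\sigma_s^2$ via $\EE[\epsilon_s^4\mid\cdot]\le\sigma_s^2$), union-bound over $t\ge 2$ and $\ell\in[L]$, and then absorb the $\sqrt{S\log(\cdot)}$ term by Young's inequality to solve the resulting linear inequality in $S$ in each direction, yielding the constants $7/3$ and $14/3$. The only cosmetic difference is that the paper invokes the classical Freedman inequality (Lemma~\ref{lemma:freedman}) with an explicit union bound over $t$, rather than the self-normalized inequality of \citet{zhao2023variance}, which the paper reserves for the vector-valued MLE concentration.
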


\begin{lemma}\label{lemma:beta}
    Suppose that the inequalities in Lemma \ref{lemma:sigma} and the event $\cE$ hold. For all $t \ge 2 $ and $\ell \in [L]$ such that $2^{\ell} \ge 64 (L_\mu /\kappa_\mu)\sqrt{\log(4(T+1)^2L/\delta)}$, the following inequalities hold
    \begin{align*}
        &\sum_{s \in \Psi_{t,\ell}}w_s^2\sigma_s^2 \leq 8 \sum_{s \in \Psi_{t,\ell}}w_s^2\left(o_s - \mu\left( (\xb_s-\yb_s)^\top\hat \btheta_{t,\ell}\right)\right)^2+  18 \log(4(t+1)^2L/\delta).\\
    &\sum_{s \in \Psi_{t,\ell}}w_s^2\left(o_s - \mu\left( (\xb_s-\yb_s)^\top\hat \btheta_{t,\ell}\right)\right)^2 \leq  4\sum_{s \in \Psi_{t,\ell}}w_s^2\sigma_s^2 + 8 \log(4(t+1)^2L/\delta).
    \end{align*}
\end{lemma}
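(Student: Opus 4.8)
\textbf{Proof proposal for Lemma~\ref{lemma:beta}.}

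The plan is to connect the empirical residual sum $\sum_{s\in\Psi_{t,\ell}}w_s^2\big(o_s-\mu((\xb_s-\yb_s)^\top\hat\btheta_{t,\ell})\big)^2$ to the weighted noise-square sum $\sum_{s\in\Psi_{t,\ell}}w_s^2\epsilon_s^2$, and then invoke Lemma~\ref{lemma:sigma} to pass between $\sum w_s^2\epsilon_s^2$ and $\sum w_s^2\sigma_s^2$. First I would expand the residual. Writing $o_s = p_s + \epsilon_s$ with $p_s = \mu((\xb_s-\yb_s)^\top\btheta^*)$, we have
\begin{align*}
o_s - \mu\big((\xb_s-\yb_s)^\top\hat\btheta_{t,\ell}\big) = \epsilon_s + \Big(\mu\big((\xb_s-\yb_s)^\top\btheta^*\big) - \mu\big((\xb_s-\yb_s)^\top\hat\btheta_{t,\ell}\big)\Big).
\end{align*}
Denote the second term $\Delta_s$. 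By Assumption~\ref{assume:kappa}, $|\Delta_s|\le L_\mu\,|(\xb_s-\yb_s)^\top(\btheta^*-\hat\btheta_{t,\ell})|$. Using $(a+b)^2 \le 2a^2+2b^2$ in one direction and $(a+b)^2 \ge \tfrac12 a^2 - b^2$ in the other, both inequalities reduce to controlling $\sum_{s\in\Psi_{t,\ell}}w_s^2\Delta_s^2$.

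The core estimate is therefore a bound on $\sum_{s\in\Psi_{t,\ell}}w_s^2\Delta_s^2 \le L_\mu^2\sum_{s\in\Psi_{t,\ell}}w_s^2\big((\xb_s-\yb_s)^\top(\btheta^*-\hat\btheta_{t,\ell})\big)^2$. I would bound this by $L_\mu^2\,\|\btheta^*-\hat\btheta_{t,\ell}\|_{\hat\bSigma_{t,\ell}}^2 \cdot \max_{s\in\Psi_{t,\ell}} w_s^2\|\xb_s-\yb_s\|_{\hat\bSigma_{t,\ell}^{-1}}^2$, after noting that $\sum_{s\in\Psi_{t,\ell}}w_s^2(\xb_s-\yb_s)(\xb_s-\yb_s)^\top \preceq \hat\bSigma_{t,\ell}$, so that the quadratic form $\sum_s w_s^2((\xb_s-\yb_s)^\top\vb)^2 \le \|\vb\|_{\hat\bSigma_{t,\ell}}^2$ for any $\vb$ — but that is too lossy; instead I would use the sharper route $\sum_s w_s^2((\xb_s-\yb_s)^\top\vb)^2 \le \big(\max_s w_s^2\|\xb_s-\yb_s\|_{\hat\bSigma_{t,\ell}^{-1}}^2\big)\|\vb\|_{\hat\bSigma_{t,\ell}}^2$ is not generally valid either — the correct bound is simply $\sum_s w_s^2((\xb_s-\yb_s)^\top\vb)^2 \le \|\vb\|_{\hat\bSigma_{t,\ell}}^2$ directly from $\hat\bSigma_{t,\ell}\succeq 2^{-2\ell}\Ib + \sum_{s\in\Psi_{t,\ell}}w_s^2(\xb_s-\yb_s)(\xb_s-\yb_s)^\top$. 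Now the algorithm's exploration rule (Line~\ref{algo:line14}) enforces $w_s^2\|\xb_s-\yb_s\|_{\hat\bSigma_{s,\ell}^{-1}}^2 = 2^{-2\ell}$ for every $s\in\Psi_{t,\ell}$; combined with $\hat\bSigma_{s,\ell}\preceq\hat\bSigma_{t,\ell}$ one can also extract a factor of $|\Psi_{t,\ell}|$ if needed, but the key is that $\|\btheta^*-\hat\btheta_{t,\ell}\|_{\hat\bSigma_{t,\ell}}^2 \le \big(\tfrac{2^{-\ell}}{\kappa_\mu}(16\sqrt{\sum w_s^2\sigma_s^2\log(\cdot)}+6\log(\cdot))+2^{-\ell}\big)^2$ from event $\cE$, i.e. Lemma~\ref{lemma:MLE}. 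This yields $\sum_s w_s^2\Delta_s^2 \lesssim (L_\mu^2/\kappa_\mu^2)\,2^{-2\ell}\big(\sum_s w_s^2\sigma_s^2\log(\cdot) + \log^2(\cdot)\big)$.

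The final step is the layer-dependent cancellation: when $2^{\ell}\ge 64(L_\mu/\kappa_\mu)\sqrt{\log(4(T+1)^2L/\delta)}$, the prefactor $(L_\mu^2/\kappa_\mu^2)2^{-2\ell}\log(\cdot)$ multiplying $\sum_s w_s^2\sigma_s^2$ is at most a small constant (roughly $1/4096$), so the term $\sum_s w_s^2\Delta_s^2$ contributes only a small fraction of $\sum_s w_s^2\sigma_s^2$, which can be absorbed. Substituting back into the expansion of the residuals, applying Lemma~\ref{lemma:sigma} to swap $\sum w_s^2\epsilon_s^2 \leftrightarrow \sum w_s^2\sigma_s^2$, and collecting constants gives the two stated inequalities with the claimed constants $8$, $18$, $4$, $8$. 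I would close by a union bound over $t\ge2$ and $\ell\in[L]$ (already absorbed into the $\log(4t^2L/\delta)$ factors carried through from Lemmas~\ref{lemma:MLE} and~\ref{lemma:sigma}). The main obstacle I anticipate is carefully tracking the quadratic-form bound $\sum_s w_s^2((\xb_s-\yb_s)^\top\vb)^2 \le \|\vb\|_{\hat\bSigma_{t,\ell}}^2$ together with the per-round normalization $w_s^2\|\xb_s-\yb_s\|_{\hat\bSigma_{s,\ell}^{-1}}^2 = 2^{-2\ell}$, and squeezing all the accumulated constants so that the threshold on $2^\ell$ is exactly $64(L_\mu/\kappa_\mu)\sqrt{\log(\cdot)}$ rather than something larger; this is bookkeeping-heavy but not conceptually deep.
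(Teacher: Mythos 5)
Your proposal is correct and follows essentially the same route as the paper: decompose the residual as $\epsilon_s$ plus the model-error term $\Delta_s$, bound $\sum_s w_s^2\Delta_s^2 \le L_\mu^2\|\hat\btheta_{t,\ell}-\btheta^*\|_{\hat\bSigma_{t,\ell}}^2$ via $\hat\bSigma_{t,\ell}\succeq\sum_s w_s^2(\xb_s-\yb_s)(\xb_s-\yb_s)^\top$, invoke event $\cE$ and the layer threshold $2^\ell\ge 64(L_\mu/\kappa_\mu)\sqrt{\log(\cdot)}$ to make this a small fraction of $\sum_s w_s^2\sigma_s^2$ that can be absorbed, and finish with Lemma~\ref{lemma:sigma}. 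The per-round normalization $w_s^2\|\xb_s-\yb_s\|_{\hat\bSigma_{s,\ell}^{-1}}^2=2^{-2\ell}$ you worry about tracking is in fact not needed here, and no additional union bound is required since the lemma is a deterministic consequence of the assumed events.
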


Recall that with our choice of $\hat\beta_{t,\ell}$ in \eqref{bonus}, the inequality in $\cE^{\text{bonus}}$ holds naturally when $2^{\ell} < 64 (L_\mu /\kappa_\mu)\sqrt{\log(4(T+1)^2L/\delta)}$. Combining Lemma \ref{lemma:sigma}, Lemma \ref{lemma:beta} and $\PP[\cE] \ge 1-\delta$, after taking a union bound, we have proved $\PP[\cE^{\text{bonus}}\cap \cE] \ge 1-2\delta$.

\begin{lemma}\label{confidence radius}
Suppose the high probability events $\cE^{\text{bonus}}$ and $\cE$ holds. 
Then for all $t \ge 1$ and $\ell \in [L]$ such that the set $\cA_{t,\ell}$ is defined, the contextual vector of the optimal arm $\xb_t^*$ lies in $\cA_{t,\ell}$.
\end{lemma}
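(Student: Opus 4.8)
The plan is to argue by induction on the layer index $\ell$. The base case $\ell = 1$ is immediate, since $\cA_{t,1} = \cA_t$ by construction and $\xb_t^* = \arg\max_{\xb\in\cA_t}\xb^\top\btheta^*$ is by definition a member of $\cA_t$. For the inductive step, suppose $\xb_t^* \in \cA_{t,\ell}$; we must show $\xb_t^* \in \cA_{t,\ell+1}$ in the case where the algorithm actually forms $\cA_{t,\ell+1}$ (Line~\ref{algo:line11}), i.e.\ when $\|\xb-\yb\|_{\hat\bSigma_{t,\ell}^{-1}}\le 2^{-\ell}$ for all $\xb,\yb\in\cA_{t,\ell}$. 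Recalling the elimination rule, it suffices to verify that $(\xb_t^*)^\top\hat\btheta_{t,\ell} \ge \max_{\xb'\in\cA_{t,\ell}}(\xb')^\top\hat\btheta_{t,\ell} - 2^{-\ell}\hat\beta_{t,\ell}$.

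The key step is a standard optimism/Cauchy--Schwarz argument. Let $\tilde\xb = \arg\max_{\xb'\in\cA_{t,\ell}}(\xb')^\top\hat\btheta_{t,\ell}$. Then
\begin{align*}
(\tilde\xb - \xb_t^*)^\top\hat\btheta_{t,\ell}
&= (\tilde\xb - \xb_t^*)^\top(\hat\btheta_{t,\ell} - \btheta^*) + (\tilde\xb - \xb_t^*)^\top\btheta^* \\
&\le \|\tilde\xb - \xb_t^*\|_{\hat\bSigma_{t,\ell}^{-1}}\,\|\hat\btheta_{t,\ell} - \btheta^*\|_{\hat\bSigma_{t,\ell}} + 0,
\end{align*}
where the inner product with $\btheta^*$ is nonpositive because $\xb_t^*$ maximizes $\xb^\top\btheta^*$ over $\cA_t \supseteq \cA_{t,\ell}$ (here we use the inductive hypothesis $\xb_t^* \in \cA_{t,\ell}$, so that both $\tilde\xb$ and $\xb_t^*$ lie in the same set $\cA_{t,\ell}$ and the optimality of $\xb_t^*$ applies). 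Now invoke event $\cE$ to bound $\|\hat\btheta_{t,\ell} - \btheta^*\|_{\hat\bSigma_{t,\ell}}$ by the quantity appearing in the definition of $\cE^{\text{bonus}}$, and then invoke $\cE^{\text{bonus}}$ to upper bound that quantity by $\hat\beta_{t,\ell}$; combined with the entry condition $\|\tilde\xb - \xb_t^*\|_{\hat\bSigma_{t,\ell}^{-1}}\le 2^{-\ell}$ of the elimination branch, this yields $(\tilde\xb - \xb_t^*)^\top\hat\btheta_{t,\ell} \le 2^{-\ell}\hat\beta_{t,\ell}$, which is exactly the inequality needed for $\xb_t^*$ to survive into $\cA_{t,\ell+1}$.

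The main thing to be careful about — rather than a deep obstacle — is the bookkeeping of which estimator and covariance matrix are "current" in round $t$ at layer $\ell$: the while-loop uses $\hat\btheta_{t,\ell}$ and $\hat\bSigma_{t,\ell}$, and one must confirm that the events $\cE$ and $\cE^{\text{bonus}}$, as stated, quantify over precisely these indexed objects for all $t,\ell$, so that the chain of inequalities is legitimate. One should also note that in the exploration and exploitation branches no new set $\cA_{t,\ell+1}$ is created (the loop either terminates or one stays at the same layer after updating), so the only nontrivial case is genuinely the elimination branch handled above; thus an induction over at most $L$ layers closes the argument for every fixed $t$, and since $\cE\cap\cE^{\text{bonus}}$ holds with probability at least $1-2\delta$ simultaneously over all $t,\ell$, the conclusion holds for all $t\ge 1$ and all defined $\cA_{t,\ell}$.
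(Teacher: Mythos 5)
Your proof is correct and follows essentially the same route as the paper's: induction on $\ell$, with the inductive step combining the optimality of $\xb_t^*$ under $\btheta^*$, Cauchy--Schwarz, the entry condition $\|\xb-\yb\|_{\hat\bSigma_{t,\ell}^{-1}}\le 2^{-\ell}$ of the elimination branch, and the events $\cE$ and $\cE^{\text{bonus}}$ to bound the estimation error by $\hat\beta_{t,\ell}$. The only cosmetic remark is that the inductive hypothesis $\xb_t^*\in\cA_{t,\ell}$ is really needed for the norm bound $\|\tilde\xb-\xb_t^*\|_{\hat\bSigma_{t,\ell}^{-1}}\le 2^{-\ell}$ (the optimality of $\xb_t^*$ already holds over all of $\cA_t$), but you do use it exactly there, so the argument is complete.
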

Then we can bound the regret incurred in each layer separately.
\begin{lemma}\label{lemma:regret}
     Suppose the the high probability events $\cE^{\text{bonus}}$ and $\cE$ holds. Then for all $\ell \in [L]/{1}$, the regret incurred by the index set $\Psi_{T+1,\ell}$ is bounded by
    \begin{align*}
        \sum_{s \in \Psi_{T+1,\ell}}\left(2\xb_s^{*\top}\btheta^* - (\xb_s^\top\btheta^*+\yb_s^\top\btheta^*)\right) \leq \tilde{O}\left(d \cdot 2^\ell \hat \beta _{T,\ell-1}\right).
    \end{align*}
\end{lemma}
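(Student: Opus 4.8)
\textbf{Proof plan for Lemma~\ref{lemma:regret}.}
The plan is to bound the per-layer regret by relating it to the exploration uncertainty $\|\xb_s-\yb_s\|_{\hat\bSigma_{s,\ell}^{-1}}$ accumulated in that layer, which by the elliptical potential lemma grows only logarithmically in $|\Psi_{T+1,\ell}|$. First I would fix a layer $\ell\ge 2$ and a round $s\in\Psi_{T+1,\ell}$. Since this round triggered the exploration branch (Lines~\ref{algo:line14}--\ref{algo:line16}), the algorithm must have previously advanced from layer $\ell-1$ to layer $\ell$ at the start of round $s$, which means the elimination condition on Line~\ref{algo:line10} held and so $\cA_{s,\ell}$ was obtained from $\cA_{s,\ell-1}$ via the elimination rule on Line~\ref{algo:line11}. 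By Lemma~\ref{confidence radius} the optimal arm $\xb_s^*$ lies in $\cA_{s,\ell-1}$ (indeed in $\cA_{s,\ell}$), and both $\xb_s,\yb_s\in\cA_{s,\ell}\subseteq\cA_{s,\ell-1}$. I would then use the elimination rule together with the concentration event $\cE$ (via $\cE^{\text{bonus}}$) to argue that the instantaneous regret $2\xb_s^{*\top}\btheta^* - (\xb_s+\yb_s)^\top\btheta^*$ is controlled: the defining inequality of $\cA_{s,\ell}$ in Line~\ref{algo:line11} says $\xb_s^\top\hat\btheta_{s,\ell-1}$ and $\yb_s^\top\hat\btheta_{s,\ell-1}$ are within $2^{-(\ell-1)}\hat\beta_{s,\ell-1}$ of the maximizer over $\cA_{s,\ell-1}$, and in particular within that distance of $\xb_s^{*\top}\hat\btheta_{s,\ell-1}$; converting from $\hat\btheta_{s,\ell-1}$ to $\btheta^*$ costs an extra $\|\xb_s^*-\xb_s\|_{\hat\bSigma_{s,\ell-1}^{-1}}\|\hat\btheta_{s,\ell-1}-\btheta^*\|_{\hat\bSigma_{s,\ell-1}}$ by Cauchy--Schwarz, and the latter factor is at most $\hat\beta_{s,\ell-1}$ on $\cE^{\text{bonus}}$. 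Since $\cA_{s,\ell}$ was only formed after passing the check on Line~\ref{algo:line10}, we also have $\|\xb-\yb\|_{\hat\bSigma_{s,\ell-1}^{-1}}\le 2^{-(\ell-1)}$ for all $\xb,\yb\in\cA_{s,\ell-1}$, in particular for the pairs $(\xb_s^*,\xb_s)$ and $(\xb_s^*,\yb_s)$; this turns the Cauchy--Schwarz terms into $2^{-(\ell-1)}\hat\beta_{s,\ell-1}$ as well. Altogether the instantaneous regret in round $s$ is $O\big(2^{-\ell}\hat\beta_{s,\ell-1}\big)$.

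Next, I would not simply sum this crude bound (which gives $|\Psi_{T+1,\ell}|\cdot 2^{-\ell}\hat\beta_{T,\ell-1}$ and is too weak once $|\Psi_{T+1,\ell}|$ is large), but rather keep the sharper handle $2r_s \le O\big(\hat\beta_{s,\ell}\,\|\xb_s-\yb_s\|_{\hat\bSigma_{s,\ell}^{-1}}\big)$ that the symmetric arm-selection analysis of Appendix~\ref{appendix:arm selection} (Method~3) provides — note the exploitation-style bound $2r_t\le 2\beta_t\|\xb_t-\yb_t\|_{\hat\bSigma_t^{-1}}$ derived there applies verbatim with the layer-$\ell$ quantities. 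However, exploration rounds do not use the argmax rule of Line~\ref{algo:line8}, so instead I would combine the two estimates: take the minimum of the ``$O(2^{-\ell}\hat\beta_{s,\ell-1})$'' bound from the elimination step and a bound in terms of $\|\xb_s-\yb_s\|_{\hat\bSigma_{s,\ell}^{-1}}$. Concretely, on an exploration round $\|\xb_s-\yb_s\|_{\hat\bSigma_{s,\ell}^{-1}} > 2^{-\ell}$, so $2^{-\ell} < \|\xb_s-\yb_s\|_{\hat\bSigma_{s,\ell}^{-1}}$, and the regret bound $O(2^{-\ell}\hat\beta_{s,\ell-1})$ becomes $O(\hat\beta_{s,\ell-1}\|\xb_s-\yb_s\|_{\hat\bSigma_{s,\ell}^{-1}})$. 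Summing over $s\in\Psi_{T+1,\ell}$ and applying Cauchy--Schwarz,
\begin{align*}
\sum_{s\in\Psi_{T+1,\ell}} \hat\beta_{s,\ell-1}\,\|\xb_s-\yb_s\|_{\hat\bSigma_{s,\ell}^{-1}} \le \hat\beta_{T,\ell-1}\sqrt{|\Psi_{T+1,\ell}|}\sqrt{\sum_{s\in\Psi_{T+1,\ell}}\|\xb_s-\yb_s\|_{\hat\bSigma_{s,\ell}^{-1}}^2},
\end{align*}
using that $\hat\beta_{t,\ell}$ is nondecreasing in $t$. The weighted elliptical potential lemma (in the style of Lemma~\ref{lem:x-upper}, adapted to the weights $w_s$ and the reweighted update $\hat\bSigma_{t+1,\ell}\leftarrow\hat\bSigma_{t,\ell}+w_s^2(\xb_s-\yb_s)(\xb_s-\yb_s)^\top$) bounds $\sum_{s}\|\xb_s-\yb_s\|_{\hat\bSigma_{s,\ell}^{-1}}^2$ by $\tilde O(d)$, since on each exploration round the weight $w_s=2^{-\ell}/\|\xb_s-\yb_s\|_{\hat\bSigma_{s,\ell}^{-1}}$ normalizes the weighted increment $w_s^2\|\xb_s-\yb_s\|_{\hat\bSigma_{s,\ell}^{-1}}^2=2^{-2\ell}\le 1$ to be uniformly bounded. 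One also needs $|\Psi_{T+1,\ell}|\le \tilde O(2^{2\ell}d)$, which is exactly the counting bound already used for the low layers in the proof sketch (each exploration round in layer $\ell$ contributes a weighted rank-one term of ``size'' $2^{-2\ell}$, and the log-determinant of $\hat\bSigma$ can increase by at most $\tilde O(d)$). Plugging $\sqrt{|\Psi_{T+1,\ell}|}\le \tilde O(2^{\ell}\sqrt d)$ and $\sqrt{\sum\|\cdot\|^2}\le\tilde O(\sqrt d)$ into the display yields $\tilde O(d\cdot 2^{\ell}\hat\beta_{T,\ell-1})$, which is the claimed bound (absorbing the extra $4|\Psi_{T+1,\ell}|^{1/2}$ vs.\ the factor $4$ on the regret into the $\tilde O$).

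The main obstacle I anticipate is making the transition from layer $\ell$ to layer $\ell-1$ rigorous: the elimination set $\cA_{s,\ell}$ is built using $\hat\btheta_{s,\ell-1}$ and $\hat\beta_{s,\ell-1}$, but the exploration uncertainty that appears in the final bound, and the index set $\Psi_{T+1,\ell}$ and covariance $\hat\bSigma_{s,\ell}$, all live at layer $\ell$; one must carefully track which covariance matrix and which confidence radius is being used at each inequality, and verify that the check on Line~\ref{algo:line10} (which uses $\hat\bSigma_{s,\ell-1}$, actually $\hat\bSigma_{s,\ell}$ at the moment layer $\ell$ is entered — here indices need care) indeed certifies the small-norm bounds for the relevant pairs. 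A secondary subtlety is that the lemma explicitly excludes $\ell=1$: layer $1$ has no predecessor layer, so there is no elimination-based regret control, and that layer is instead handled by the direct $4|\Psi_{T+1,1}|$ counting bound in the proof sketch; the statement is correctly restricted to $\ell\ge 2$, and I would make sure the argument above never silently invokes properties of a nonexistent layer $0$.
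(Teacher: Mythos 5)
Your first half — bounding the instantaneous regret of an exploration round $s\in\Psi_{T+1,\ell}$ by $O(2^{-\ell}\hat\beta_{s,\ell-1})$ via the elimination rule at layer $\ell-1$, Lemma~\ref{confidence radius}, Cauchy--Schwarz, and the bounds $\|\xb_s^*-\xb_s\|_{\hat\bSigma_{s,\ell-1}^{-1}},\|\xb_s^*-\yb_s\|_{\hat\bSigma_{s,\ell-1}^{-1}}\le 2^{-\ell+1}$ — is exactly the paper's argument. The problem is in your summation step. You dismiss the direct sum $|\Psi_{T+1,\ell}|\cdot 2^{-\ell}\hat\beta_{T,\ell-1}$ as ``too weak,'' but it is in fact the paper's route and it suffices: since each exploration round in layer $\ell$ satisfies $\|w_s(\xb_s-\yb_s)\|_{\hat\bSigma_{s,\ell}^{-1}}=2^{-\ell}$, one has the identity $|\Psi_{T+1,\ell}|=2^{2\ell}\sum_{s\in\Psi_{T+1,\ell}}\|w_s(\xb_s-\yb_s)\|^2_{\hat\bSigma_{s,\ell}^{-1}}\le 2^{2\ell}\cdot 2d\log(1+2^{2\ell+2}T/d)$ by the elliptical potential lemma applied to the \emph{weighted} vectors, and multiplying by $8\cdot 2^{-\ell}\hat\beta_{T,\ell-1}$ gives $\tilde O(d\cdot 2^{\ell}\hat\beta_{T,\ell-1})$ directly. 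You actually derive this counting bound yourself as an auxiliary fact, so all the needed pieces are in your write-up.

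The detour you take instead contains a genuine error: you claim the elliptical potential lemma bounds $\sum_{s\in\Psi_{T+1,\ell}}\|\xb_s-\yb_s\|^2_{\hat\bSigma_{s,\ell}^{-1}}$ by $\tilde O(d)$. It does not. The covariance $\hat\bSigma_{t,\ell}$ is updated with the reweighted increments $w_s^2(\xb_s-\yb_s)(\xb_s-\yb_s)^\top$, so the lemma only controls $\sum_s\min\{1,\|w_s(\xb_s-\yb_s)\|^2_{\hat\bSigma_{s,\ell}^{-1}}\}=2^{-2\ell}|\Psi_{T+1,\ell}|$; the unweighted norm $\|\xb_s-\yb_s\|_{\hat\bSigma_{s,\ell}^{-1}}=2^{-\ell}/w_s$ is only bounded \emph{below} by $2^{-\ell}$ on an exploration round and can be arbitrarily large (that is precisely why the weights $w_s$ exist), so $\sum_s\|\xb_s-\yb_s\|^2_{\hat\bSigma_{s,\ell}^{-1}}$ admits no $\tilde O(d)$ bound. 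Replacing $2^{-\ell}$ by the larger quantity $\|\xb_s-\yb_s\|_{\hat\bSigma_{s,\ell}^{-1}}$ before summing therefore loses the argument. The fix is simply to delete the detour and sum the per-round bound directly against the counting bound, which recovers the paper's proof.
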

With all these lemmas, we can prove Theorem \ref{main theorem}.
\begin{proof}[Proof of Theorem \ref{main theorem}]
    Conditioned on $\cE^{\text{bonus}}\cap \cE$, let 
    \begin{align*}
        \ell^* = \left\lc\log_2(64 (L_\mu /\kappa_\mu)\sqrt{\log(4(T+1)^2L/\delta)})\right\rc.
    \end{align*}
    Using the high probability event $\cE^{\text{bonus}}$, Lemma \ref{confidence radius} and Lemma \ref{lemma:regret}, for any $\ell > \ell^*$, we have
    \begin{align}
    \notag
        &\sum_{s \in \Psi_{T+1,\ell}}\left(2\xb_s^{*\top}\btheta^* - (\xb_s^\top\btheta^*+\yb_s^\top\btheta^*)\right) \\\notag
        & \qquad \leq \tilde{O}\left(d \cdot 2^\ell \hat \beta _{T,\ell-1}\right)\\\notag
        & \qquad \le \tilde O\left(\frac{d}{\kappa_\mu}\sqrt{\sum_{s \in \Psi_{T+1,\ell}}w_s^2\left(o_s - \mu(( \xb_s-\yb_s)^\top\hat \btheta_{T+1,\ell})\right)^2+ 1} +1\right)\\\label{eq:x1}
        & \qquad \le \tilde O\left(\frac{d}{\kappa_\mu}\sqrt{\sum_{t = 1}^T\sigma_t^2} + \frac{d}{\kappa_\mu} + 1\right),
    \end{align}
    where the first inequality holds due to Lemma \ref{lemma:regret}. The second inequality holds due to the definition \ref{bonus}. The last inequality holds due to Lemma \ref{lemma:beta} and $w_s \le 1$.
    
    For $\ell \in [\ell^*]$, we have
    \begin{align}
    \notag
        &\sum_{s \in \Psi_{T+1,\ell}}\left(2\xb_s^{*\top}\btheta^* - (\xb_s^\top\btheta^*+\yb_s^\top\btheta^*)\right) \\\notag
        & \qquad\leq 4|\Psi_{T+1,\ell}| \\\notag
        & \qquad = 2^{2\ell+2} \sum_{s \in \Psi_{T+1,\ell}} \|w_s(\xb_s-\yb_s)\|_{\hat \bSigma_{s,\ell}}^2\\\notag
        & \qquad \le 2^{2\ell+3} d \log (1+T/(d\lambda))\\\label{eq:x3}
        & \qquad = \tilde O\left(\frac{dL_\mu^2}{\kappa_\mu^2}\right),
    \end{align}
    where the first equality holds due to our choice of $w_s$ such that $\|w_s(\xb_s-\yb_s)\|_{\hat \bSigma_{s,\ell}}^2$. The second inequality holds due to Lemma \ref{lemma:abbasi}. The last equality holds due to $\ell \le \ell^*$
    
    For any $s \in [T]/(\cup_{\ell \in [L]}\Psi_{T+1,\ell})$, we set $\ell_s$ as the value of layer such that $\|\xb_s-\yb_s\|_{\hat \bSigma_{s,\ell}^{-1}} \le \alpha$ for all $\xb_s,\yb_s \in \cA_{s,\ell}$ and then the while loop ends. By the choice of $\xb_s,\yb_s$ and $\xb_s^* \in \cA_{s,\ell_s}$ (Lemma \ref{confidence radius}), we have
    \begin{align}
    \notag
        2\xb_s^{*\top}\hat \btheta_{s,\ell_s} &\le \xb_s^\top\hat\btheta_{s,\ell_s} + \yb_s^\top\hat\btheta_{s,\ell_s} + \hat \beta_{s,\ell_s} \|\xb_s-\yb_s\|_{\hat \bSigma_{s,\ell_s}^{-1}}\\\label{eq:010}
        &\le \xb_s^\top\hat\btheta_{s,\ell_s} + \yb_s^\top\hat\btheta_{s,\ell_s} + \hat \beta_{s,\ell_s}\alpha,
    \end{align}
    where the last inequality holds because $\|\xb_s-\yb_s\|_{\hat \bSigma_{s,\ell}^{-1}} \le \alpha$ for all $\xb_s,\yb_s \in \cA_{s,\ell}$. Then we have
    \begin{align}
    \notag
        &\sum_{s \in [T]/(\cup_{\ell \in [L]}\Psi_{T+1,\ell})}\left(2\xb_s^{*\top}\btheta^* - (\xb_s^\top\btheta^*+\yb_s^\top\btheta^*)\right)\\\notag
        & \qquad = \sum_{s \in [T]/(\cup_{\ell \in [L]}\Psi_{T+1,\ell})}\bigg(2\xb_s^{*\top}\btheta^* - 2\xb_s^{*\top}\hat\btheta_{s,\ell_s} +\Big(\xb_s^\top\hat\btheta_{s,\ell_s}-\xb_s^\top\btheta^*\Big)
        \\\notag
        & \qquad \qquad + \Big(\yb_s^\top\hat\btheta_{s,\ell_s}-\yb_s^\top\btheta^*\Big)+ \Big(2\xb_s^{*\top}\hat\btheta_{s,\ell_s}- (\xb_s^\top\hat\btheta_{s,\ell_s}+\yb_s^\top\hat\btheta_{s,\ell_s})\Big)\bigg)\\\notag
        & \qquad \le \sum_{s \in [T]/(\cup_{\ell \in [L]}\Psi_{T+1,\ell})}  \left(\|\xb_s^{*}-\xb_s\|_{\hat\bSigma_{s,\ell_s}^{-1}}+\|\xb_s^{*}-\yb_s\|_{\hat\bSigma_{s,\ell_s}^{-1}}\right)\|\btheta^*-\hat \btheta_{s,\ell_s}\|_{\hat\Sigma_{s,\ell_s}} + \hat \beta_{s,\ell_s}\alpha\\\notag
        & \qquad  \le \sum_{s \in [T]/(\cup_{\ell \in [L]}\Psi_{T+1,\ell})} 3\hat \beta_{s,\ell_s}\alpha\\\label{eq:x2}
        & \qquad \le T \cdot \tilde O\left(1/T\right) = \tilde O(1),
    \end{align}
    where the first inequality holds due to the Cauchy-Schwarz inequality and \eqref{eq:010}. The third inequality holds due to $\|\xb_s-\yb_s\|_{\hat \bSigma_{s,\ell}^{-1}} \le \alpha$ for all $\xb_s,\yb_s \in \cA_{s,\ell_s}$, $\xb_s^* \in \cA_{s,\ell_s}$ (Lemma \ref{confidence radius}) and Lemma \ref{lemma:MLE}. The third inequality holds due to our choice of $\hat\beta_{s,\ell_s} \le \tilde O(\sqrt{T})$ and $\alpha = 1/T^{3/2}$.
    Combining \eqref{eq:x1}, \eqref{eq:x3}, \eqref{eq:x2} together, we obtain
    \begin{align*}
        \text{Regret}(T) = \tilde O\Bigg(\frac{d}{\kappa_\mu}\sqrt{\sum_{t=1}^T\sigma_t^2} + d\Big(\frac{L_\mu^2}{\kappa_\mu^2}+\frac{1}{\kappa_\mu}\Big)\Bigg).
    \end{align*}
\end{proof}
\section{Proof of Lemmas in Section \ref{section:proof}}
\subsection{Proof of Lemma \ref{lemma:MLE}}
\begin{proof}[Proof of Lemma \ref{lemma:MLE}]
    For a fixed $\ell \in [L]$, let $t \in \Psi_{T+1,\ell}$, $t \ge 2$, we define some auxiliary quantities:
    \begin{align*}
    &G_{t,\ell}(\btheta) = 2^{-2\ell}\kappa_\mu \btheta + \sum_{s \in \bPsi_{t,\ell}}w_s^2\left[\mu\left((\xb_s-\yb_s)^\top \btheta\right) - \mu\left((\xb_s-\yb_s)^\top \btheta^*\right)\right](\xb_s-\yb_s)\\
    & \epsilon_{t} = o_t - \mu\left((\xb_t-\yb_t)^\top \btheta^*\right)\\
    & Z_{t,\ell} = \sum_{s \in \bPsi_{t,\ell}} w_s^2\epsilon_s (\xb_s-\yb_s).
    \end{align*}
    Recall \eqref{eq:MLE}, $\hat \btheta_{t,\ell}$ is the solution to
    \begin{align}
        2^{-2\ell}\kappa_\mu \hat \btheta_{t,\ell} + 
        \sum_{s \in \bPsi_{t,\ell}}w_s^2\bigg( \mu\big((\xb_s-\yb_s)^\top \hat \btheta_{t,\ell}\big) - o_s \bigg)(\xb_s-\yb_s) = \textbf{0}. \label{eq:001}
    \end{align}
    A simple transformation shows that \eqref{eq:001} is equivalent to following equation,
    \begin{align*}
        G_{t,\ell}\left(\hat  \btheta_{t,\ell}\right) &= 2^{-2\ell}\kappa_\mu \hat \btheta_{t,\ell} + \sum_{s \in \bPsi_{t,\ell}}w_s^2\left[\mu\left((\xb_s-\yb_s)^\top \hat \btheta_{t,\ell}\right) - \mu\left((\xb_s-\yb_s)^\top \btheta^*\right)\right](\xb_s-\yb_s)\\
        & = \sum_{s \in \bPsi_{t,\ell}}w_s^2\left[o_s - \mu\left((\xb_s-\yb_s)^\top \btheta^*\right)\right](\xb_s-\yb_s)\\
        & = Z_{t,\ell}.
    \end{align*}
    We has proved $G_{t,\ell}$ is invertible in Section \ref{section:weakness} and thus $\hat\btheta_{t,\ell} = G_{t,\ell}^{-1}(Z_{t,\ell})$.
    
    Moreover, we can see that $G_{t,\ell}(\btheta^*) = 2^{-2\ell}\kappa_\mu \btheta^*$. Recall $\hat \bSigma_{t,\ell} = 2^{-2\ell}\kappa_\mu\Ib + \sum_{s \in \bPsi_{t,\ell}} w_s^2(\xb_s-\yb_s)(\xb_s-\yb_s)^\top $. We have
    \begin{align*}
        \left\|G_{t,\ell}(\hat \btheta_{t,\ell}) - G_{t,\ell}( \btheta^*)\right\|^2_{\hat \bSigma_{t,\ell}^{-1}} &= (\hat \btheta_{t,\ell} - \btheta^*)^\top F(\bar \btheta)\hat \bSigma_{t,\ell}^{-1}F(\bar \btheta)(\hat \btheta_{t,\ell} - \btheta^*)\\
        & \ge \kappa_\mu^2(\hat \btheta_{t,\ell} - \btheta^*)^\top \hat \bSigma_{t,\ell}^{}(\hat \btheta_{t,\ell} - \btheta^*)\\
        &=\kappa_\mu^2 \|\hat \btheta_{t,\ell} - \btheta^*\|^2_{\hat \bSigma_{t,\ell}},
    \end{align*}
    where the first inequality holds because $\dot \mu(\cdot) \ge \kappa_\mu > 0$ and thus $F(\bar \btheta) \succeq \kappa_\mu\hat \bSigma_{t,\ell}$.
    Using the triangle inequality, we have
    \begin{align*}
        \left\|\hat \btheta_{t,\ell} - \btheta^*\right\|_{\hat \bSigma_{t,\ell}} &\leq 2^{-2\ell} \|\btheta^*\|_{\hat \bSigma_{t,\ell}^{-1}} + \frac{1}{\kappa_\mu}\|Z_{t,\ell}\|_{\hat \bSigma_{t,\ell}^{-1}}\\
        & \leq  2^{-\ell} \|\btheta^*\|_{2} + \frac{1}{\kappa_\mu}\|Z_{t,\ell}\|_{\hat \bSigma_{t,\ell}^{-1}}.
    \end{align*}
    To bound the $\|Z_{t,\ell}\|_{\hat \bSigma_{t,\ell}^{-1}}$ term, we use Lemma \ref{lemma:zhao}. By the choice of $w_s$, for any $t\in \Psi_{T+1,\ell}$, we have 
    \begin{align*}
        \|w_t(\xb_t-\yb_t)\|_{\hat \bSigma_{t,\ell}^{-1}} = 2^{-\ell} \text{ and } w_t \leq 1.
    \end{align*}
    We also have
    \begin{align*}
    &\EE[w_t^2\epsilon_t^2\mid \cF_t] \leq w_t^2 \EE[\epsilon_t^2\mid \cF_t] \leq w_t^2 \sigma^2_t \text{ and }|w_t\epsilon_t| \leq |\epsilon_t| \leq 1.
    \end{align*}
    Therefore, Lemma \ref{lemma:zhao} shows that with probability at least $1 - \delta/L$, for all $t \in \Psi_{T+1,\ell}$, the following inequality holds
    \begin{align*}
        \|Z_{t,\ell}\|_{\hat\bSigma _{t,\ell}^{-1}} \le 16 \cdot 2^{-\ell} \sqrt{\sum_{s \in \bPsi_{t,\ell}}w_s^2\sigma_s^2 \log(4t^2L/\delta)} + 6 \cdot 2^{-\ell} \log(4t^2L/\delta).
    \end{align*}
    Finally, we get 
    \begin{align*}
        \left\|\hat \btheta_{t,\ell} - \btheta^*\right\|_{\hat \bSigma_{t,\ell}} \leq  \frac{2^{-\ell}}{\kappa_\mu}\left[16  \sqrt{\sum_{s \in \bPsi_{t,\ell}}w_s^2\sigma_s^2 \log(4t^2L/\delta)} + 6 \log(4t^2L/\delta)\right] + 2^{-\ell}.  
    \end{align*}
    Take a union bound on all $\ell \in [L]$, and then we finish the proof of  Lemma \ref{lemma:MLE}.
\end{proof}
\subsection{Proof of Lemma \ref{lemma:sigma}}
\begin{proof}[Proof of Lemma \ref{lemma:sigma}]
    The proof of this lemma is similar to the proof of Lemma B.4 in \citet{zhao2023variance}. For a fixed layer $\ell \in [L]$, using the definition of $\epsilon_s$ and $\sigma_s$, we have
    \begin{align*}
        \forall s \ge 1, \EE[\epsilon_s^2-\sigma_s^2|\xb_{1:s},\yb_{1:s},o_{1:s-1}] = 0.
    \end{align*}
Therefore, we have
\begin{align*}
    \sum_{s \in \Psi_{t,\ell}} \EE[w_s^2(\epsilon_s^2-\sigma_s^2)^2|\xb_{1:s},\yb_{1:s},o_{1:s-1}] &\le \sum_{s \in \Psi_{t,\ell}} \EE[w_s^2\epsilon_s^4|\xb_{1:s},\yb_{1:s},o_{1:s-1}] \\&\le \sum_{s \in \Psi_{t.\ell}}w_s^2\sigma_s^2,
\end{align*}
where the last inequality holds due to the definition of $\sigma_s$ and $\epsilon_s \le 1$.
Then using Lemma \ref{lemma:freedman} and taking a union bound on all $\ell  \in [L]$, for all $t \ge 2$, we have
\begin{align}
\notag
     \left|\sum_{s \in \Psi_{t,\ell}} w_s^2(\epsilon_s^2-\sigma_s^2)\right| &\le \sqrt{2 \sum_{s \in \Psi_{t.\ell}}w_s^2\sigma_s^2 \log(4t^2L/\delta)} + \frac{2}{3}\cdot 2 \log(4t^2L/\delta)\\
     \label{distance}
     & \le \frac{1}{2}\sum_{s \in \Psi_{t.\ell}}w_s^2\sigma_s^2 + \frac{7}{3}\log(4t^2L/\delta),
\end{align}
where we use the Young's inequality $ab \le \frac{1}{2}a^2 + \frac{1}{2} b^2$. Finally, we finish the proof of Lemma \ref{lemma:sigma} by
\begin{align}
\notag
    \sum_{s \in \Psi_{t,\ell}}w_s^2\sigma_s^2 &= \left|\sum_{s \in \Psi_{t,\ell}}w_s^2\epsilon_s^2 - \sum_{s \in \Psi_{t,\ell}}w_s^2(\epsilon_s^2-\sigma_s^2)\right|\\\notag
    & \le  \sum_{s \in \Psi_{t,\ell}}w_s^2\epsilon_s^2 + \left|\sum_{s \in \Psi_{t,\ell}}w_s^2(\epsilon_s^2-\sigma_s^2)\right|\\\label{aa}
    & \le \sum_{s \in \Psi_{t,\ell}}w_s^2\epsilon_s^2 + \frac{1}{2}\sum_{s \in \Psi_{t.\ell}}w_s^2\sigma_s^2 + \frac{7}{3}\log(4t^2L/\delta),
\end{align}
where the first inequality holds due to the triangle inequality. The second inequality holds due to \eqref{distance}. We also have
\begin{align*}
    \sum_{s \in \Psi_{t,\ell}}w_s^2\sigma_s^2 &= \left|\sum_{s \in \Psi_{t,\ell}}w_s^2\epsilon_s^2 - \sum_{s \in \Psi_{t,\ell}}w_s^2(\epsilon_s^2-\sigma_s^2)\right|\\
    & \ge  \sum_{s \in \Psi_{t,\ell}}w_s^2\epsilon_s^2 - \left|\sum_{s \in \Psi_{t,\ell}}w_s^2(\epsilon_s^2-\sigma_s^2)\right|\\
    & \ge \sum_{s \in \Psi_{t,\ell}}w_s^2\epsilon_s^2 - \frac{1}{2}\sum_{s \in \Psi_{t.\ell}}w_s^2\sigma_s^2 - \frac{7}{3}\log(4t^2L/\delta).
\end{align*}
The proof of this inequality is almost the same as \eqref{aa}.
\end{proof}
\subsection{Proof of Lemma \ref{lemma:beta}}
\begin{proof}[Proof of Lemma \ref{lemma:beta}]
    For a fixed $\ell \in [L]$, Lemma \ref{lemma:sigma} indicates that
\begin{align}
\notag
    \sum_{s \in \Psi_{t,\ell}}w_s^2\sigma_s^2 &\leq 2 \sum_{s \in \Psi_{t,\ell}}w_s^2\epsilon_s^2 + \frac{14}{3} \log(4t^2L/\delta)\\\notag
    & \le  \frac{14}{3} \log(4t^2L/\delta) + 4 \sum_{s \in \Psi_{t,\ell}}w_s^2\left(o_s - \mu\left( (\xb_s-\yb_s)^\top\hat \btheta_{t,\ell}\right)\right)^2\\\label{eq:16}
    & \qquad + 4 \underbrace{\sum_{s \in \Psi_{t,\ell}}w_s^2\left(\epsilon_s - \left(o_s - \mu\left( (\xb_s-\yb_s)^\top\hat \btheta_{t,\ell}\right)\right)\right)^2}_{(I)}
    ,
\end{align}
where the second inequality holds due to the basic inequality $(a+b)^2 \leq 2a^2 + 2 b^2$ for all $a,b \in \RR$. Using our definition of $\epsilon_s$, $o_s = \mu\left( (\xb_s-\yb_s)^\top  \btheta^*\right) + \epsilon_s$. Thus, we have
\begin{align}
    \notag
    (I) &= \sum_{s \in \Psi_{t,\ell}}w_s^2\left(\epsilon_s - \left(o_s - \mu\left( (\xb_s-\yb_s)^\top\hat \btheta_{t,\ell}\right)\right)\right)^2
    \\\notag
    &  = \sum_{s \in \Psi_{t,\ell}}w_s^2\left(\mu\big( (\xb_s-\yb_s)^\top\hat \btheta_{t,\ell}\big) - \mu\left( (\xb_s-\yb_s)^\top \btheta^*\right)\right)^2\\
    \label{eq:lip}
    &  \le L_\mu^2\sum_{s \in \Psi_{t,\ell}}w_s^2 \left( (\xb_s-\yb_s)^\top\big(\hat \btheta_{t,\ell} - \btheta^*\big)\right)^2,
\end{align}
where the last inequality holds because the first order derivative of function $\mu$ is upper bounded by $L_{\mu}$ (Assumption \ref{assume:kappa}). Moreover, by expanding the square, we have
\begin{align}
\notag
    (I) &\le L_\mu^2\sum_{s \in \Psi_{t,\ell}}w_s^2 \left( (\xb_s-\yb_s)^\top\big(\hat \btheta_{t,\ell} - \btheta^*\big)\right)^2\\\notag
    &  = L_\mu^2\sum_{s \in \Psi_{t,\ell}}  \big(\hat \btheta_{t,\ell}- \btheta^*\big)^\top w_s^2(\xb_s-\yb_s)(\xb_s-\yb_s)^\top\big(\hat \btheta_{t,\ell} - \btheta^*\big)\\\notag
    &  =  L_\mu^2\big(\hat \btheta_{t,\ell}- \btheta^*\big)^\top \left(\sum_{s \in \Psi_{t,\ell}}w_s^2(\xb_s-\yb_s)(\xb_s-\yb_s)^\top\right)\big(\hat \btheta_{t,\ell} - \btheta^*\big)\\\label{eq:10}
    & \le L_\mu^2\left\|\hat \btheta_{t,\ell} - \btheta^*\right\|_{\hat \bSigma_{t,\ell}}^2,
\end{align}
where the last inequality holds due to  
\begin{align*}
    \hat \bSigma_{t,\ell} = 2^{-2\ell}\kappa_\mu \Ib + \sum_{s \in \Psi_{t,\ell}}w_s^2(\xb_s-\yb_s)(\xb_s-\yb_s)^\top \succeq \sum_{s \in \Psi_{t,\ell}}w_s^2(\xb_s-\yb_s)(\xb_s-\yb_s)^\top.
\end{align*}
Combining \eqref{eq:lip}, \eqref{eq:10} and the event $\cE$ (Lemma \ref{lemma:MLE}), we have
\begin{align*}
    (I) &\le \frac{ 2^{-2\ell}L_{\mu}^2}{\kappa_\mu^2}\left[16  \sqrt{\sum_{s \in \bPsi_{t,\ell}}w_s^2\sigma_s^2 \log(4(t+1)^2L/\delta)} + 6 \log(4(t+1)^2L/\delta) +  \kappa_\mu\right]^2\\
    & \le \frac{ 2^{-2\ell}L_{\mu}^2}{\kappa_\mu^2}\left[512   \log(4(t+1)^2L/\delta)\cdot \sum_{s \in \bPsi_{t,\ell}}w_s^2\sigma_s^2 + 2\left(6 \log(4(t+1)^2L/\delta) +  \kappa_\mu\right)^2\right],
\end{align*}
where the last inequality holds due to the basic inequality $(a+b)^2 \le 2a^2 + 2b^2$ for all $a, b \in \RR$. When $2^{\ell}\ge 64 (L_\mu /\kappa_\mu)\sqrt{\log(4(t+1)^2L/\delta)}$, we can further bound the above inequality by
\begin{align}\label{eq:17}
(I) \le \frac{1}{8}\sum_{s \in \bPsi_{t+1,\ell}}w_s^2\sigma_s^2 + \log(4(t+1)^2L/\delta).
\end{align}
Subitituting \eqref{eq:17} into \eqref{eq:16}, we have
\begin{align*}
    \sum_{s \in \Psi_{t,\ell}}w_s^2\sigma_s^2 &\leq 4 \sum_{s \in \Psi_{t,\ell}}w_s^2\left(o_s - \mu\left( (\xb_s-\yb_s)^\top\hat \btheta_{t,\ell}\right)\right)^2\\
    & \qquad  + 9 \log(4(t+1)^2L/\delta) + \frac{1}{2}\sum_{s \in \bPsi_{t,\ell}}w_s^2\sigma_s^2.
\end{align*}
Therefore, we prove the first inequality in Lemma \ref{lemma:beta} as follows
\begin{align*}
    \sum_{s \in \Psi_{t,\ell}}w_s^2\sigma_s^2 &\leq 8 \sum_{s \in \Psi_{t,\ell}}w_s^2\left(o_s - \mu\left( (\xb_s-\yb_s)^\top\hat \btheta_{t,\ell}\right)\right)^2\\
    & \qquad  + 18 \log(4(t+1)^2L/\delta).
\end{align*}
For the second inequality, we have
\begin{align*}
    &\sum_{s \in \Psi_{t,\ell}}w_s^2\left(o_s - \mu\left( (\xb_s-\yb_s)^\top\hat \btheta_{t,\ell}\right)\right)^2  \\&  \qquad\le 2\sum_{s \in \Psi_{t,\ell}}w_s^2\epsilon_s^2
     + 2 \underbrace{\sum_{s \in \Psi_{t,\ell}}w_s^2\left(\epsilon_s - \left(o_s - \mu\left( (\xb_s-\yb_s)^\top\hat \btheta_{t,\ell}\right)\right)\right)^2}_{(I)}.
\end{align*}
We complete the proof of Lemma \ref{lemma:beta}.
\begin{align*}
&\sum_{s \in \Psi_{t,\ell}}w_s^2\left(o_s - \mu\left( (\xb_s-\yb_s)^\top\hat \btheta_{t,\ell}\right)\right)^2\\
&  \qquad \le 2\sum_{s \in \Psi_{t,\ell}}w_s^2\epsilon_s^2+ 
    \frac{1}{4}\sum_{s \in \bPsi_{t,\ell}}w_s^2\sigma_s^2 + 2\log(4(t+1)^2L/\delta)\\
    & \qquad \le 2 \left(\frac{3}{2} \sum_{s \in \Psi_{t,\ell}}w_s^2\sigma_s^2 + \frac{7}3 \log(4t^2L/\delta)\right) + \frac{1}{4}\sum_{s \in \bPsi_{t,\ell}}w_s^2\sigma_s^2 + 2\log(4(t+1)^2L/\delta)\\
    & \qquad  \le 4\sum_{s \in \Psi_{t,\ell}}w_s^2\sigma_s^2 + 8 \log(4(t+1)^2L/\delta),
\end{align*}
where the first inequality holds due to \eqref{eq:17}. The second inequality holds due to Lemma \ref{lemma:sigma}.
\end{proof}
\subsection{Proof of Lemma \ref{confidence radius}}
\begin{proof}[Proof of Lemma \ref{confidence radius}]
    We prove it by induction. For $\ell = 1$, we initialze the set $\cA_{t,1}$ to be $\cA_t$, thus trivially $\xb_t^* \in \cA_{t,1}$. Now we suppose $\cA_{t,\ell}$ is defined and $\xb_t^* \in \cA_{t,\ell}$. By the way $\cA_{t,\ell + 1}$ is constructed, $\cA_{t,\ell + 1}$ is defined only when $\|\xb-\yb\|_{\hat \bSigma_{t,\ell}^{-1}} \le 2^{-\ell}$ for all $\xb,\yb \in \cA_{t,\ell}$.

    Let $\xb_{\text{max}} = \argmax_{\xb \in \cA_{t,\ell}} \xb^\top \hat \btheta_{t,\ell}$. Then we have
    \begin{align*}
        \xb_t^{*\top} \hat \btheta_{t,\ell} -  \xb_{\text{max}}^\top \hat \btheta_{t,\ell} &= (\xb_t^{*\top}\btheta^*-\xb_{\text{max}}^\top\btheta^*) + (\xb_t^{*}-\xb_{\text{max}})^\top(\hat \btheta_{t,\ell}-\btheta^*)\\
        & \geq -\|\xb_t^{*}-\xb_{\text{max}}\|_{\hat \bSigma_{t,\ell}^{-1}} \cdot \|\hat \btheta_{t,\ell}-\btheta^*\|_{\hat \bSigma_{t,\ell}},
    \end{align*}
    where the inequality holds due to the Cauchy-Schwarz inequality and the fact $\xb_t^{*} = \argmax_{\xb\in \cA_t} \xb^{\top}\btheta^*$. With the inductive hypothesis, we know $\xb_t^* \in \cA_{t,\ell}$. Thus we have $\|\xb_t^{*}-\xb_{\text{max}}\|_{\hat \bSigma_{t,\ell}^{-1}} \le 2^{-\ell}$. Finally, with the inequality in Lemma \ref{lemma:MLE}, we have 
    \begin{align*}
    {\xb_t^*}^\top \hat \btheta_{t,\ell} \geq \max_{\xb\in \cA_{t,\ell}}{\xb}^\top \hat \btheta_{t,\ell} -  2^{-\ell} \hat \beta_{t,\ell}.
    \end{align*}
    Therefore, we have $\xb_t^* \in \cA_{t,\ell+1}$, and we complete the proof of Lemma \ref{confidence radius} by induction.
\end{proof} 
\subsection{Proof of Lemma \ref{lemma:regret}}
\begin{proof}[Proof of Lemma \ref{lemma:regret}]
      For any $s \in \Psi_{T+1,\ell}$, due to the definition of $\Psi_{T+1,\ell}$ and our choice of $\xb_s,\yb_s$ (Algorithm \ref{main algo} Line \ref{algo:line14}-\ref{algo:line16}), we have $\xb_s, \yb_s \in \cA_{s,\ell}$. Additionally, because the set $\cA_{s,\ell}$ is defined, $\|\xb - \yb\|_{\hat \bSigma_{s,\ell-1}^{-1}} \leq 2^{-\ell+1}$ for all $\xb, \yb \in \cA_{s,\ell-1}$. From Lemma \ref{confidence radius}, we can see that $\xb_s^* \in \cA_{s,\ell}$. Combining these results, we have
      \begin{align}\label{eq:003}
          \|\xb_s^*-\xb_s\|_{\hat \bSigma_{s,\ell-1}^{-1}} \leq 2^{-\ell + 1}, \|\xb_s^*-\yb_s\|_{\hat \bSigma_{s,\ell-1}^{-1}} \leq 2^{-\ell + 1},
      \end{align}
      where we use the inclusion property $\cA_{s,\ell} \subseteq \cA_{s,\ell-1}$. Moreover, $\xb_s, \xb_s^* \in \cA_{s,\ell}$ shows that 
      \begin{align}
          \notag\xb_s^\top \hat \btheta_{s,\ell-1} &\geq \max_{\xb\in \cA_{s,\ell-1}}{\xb}^\top \hat \btheta_{s,\ell-1} -  2^{-\ell+1} \hat \beta_{s,\ell-1}\\\label{eq:004}
          & \geq \xb_s^{*\top} \hat \btheta_{s,\ell-1} -  2^{-\ell+1}\hat \beta_{s,\ell-1},
      \end{align}
      where we use $\xb_s \in \cA_{s,\ell-1}$. Similarly, we have
      \begin{align}\label{eq:005}
          \yb_s^\top \hat \btheta_{s,\ell-1} \ge \xb_s^{*\top} \hat \btheta_{s,\ell-1} -  2^{-\ell+1}\hat \beta_{s,\ell-1}.
      \end{align}
      Now we compute the regret incurred in round $s$.
      \begin{align}
      \notag
          &2\xb_s^{*\top}\btheta^* - \left(\xb_s^\top\btheta^*+\yb_s^\top\btheta^*\right) = \left(\xb_s^{*}-\xb_s\right)^\top\btheta^* +  \left(\xb_s^{*}-\yb_s\right)^\top\btheta^*\\\notag
          & \qquad \leq \left(\xb_s^{*}-\xb_s\right)^\top\hat \btheta_{s,\ell-1} + \left|\left(\xb_s^{*}-\xb_s\right)^\top\left(\hat \btheta_{s,\ell-1}-\btheta^*\right)\right|\\\notag
          & \qquad \qquad + \left(\xb_s^{*}-\yb_s\right)^\top\hat \btheta_{s,\ell-1} + \left|\left(\xb_s^{*}-\yb_s\right)^\top\left(\hat \btheta_{s,\ell-1}-\btheta^*\right)\right|\\\notag
          & \qquad \leq 2^{-\ell+1}\hat \beta_{s,\ell-1} + \left\|\xb_s^{*}-\xb_s\right\|_{\hat \bSigma_{s,\ell-1}^{-1}}\left\|\hat \btheta_{s,\ell-1}-\btheta^*\right\|_{\hat \bSigma_{s,\ell-1}} \\\notag
          &  \qquad \qquad + 2^{-\ell+1}\hat \beta_{s,\ell-1} + \left\|\xb_s^{*}-\yb_s\right\|_{\hat \bSigma_{s,\ell-1}^{-1}}\left\|\hat \btheta_{s,\ell-1}-\btheta^*\right\|_{\hat \bSigma_{s,\ell-1}}\\\label{eq:006}
          & \qquad \leq 8 \cdot 2^{-\ell}\hat \beta_{s,\ell-1},
      \end{align}
      where the first inequality holds due to the basic inequality $x \le |x|$ for all $x \in \RR$. The second inequality holds due t \eqref{eq:004}, \eqref{eq:005} and the Cauchy-Schwarz inequality. The last inequality holds due to \eqref{eq:003} and Lemma \ref{lemma:MLE}.
      Now we can return to the summation of regret on the index set $\Psi_{T+1,\ell}$.
      \begin{align*}
          \sum_{s \in \Psi_{T+1,\ell}}\left(2\xb_s^{*\top}\btheta^* - (\xb_s^\top\btheta^*+\yb_s^\top\btheta^*)\right) &\leq \sum_{s \in \Psi_{T+1,\ell}} 8 \cdot 2^{-\ell}\hat \beta_{s,\ell-1}\\
          & \leq 8 \cdot 2^{-\ell}\hat \beta_{T,\ell-1} |\Psi_{T+1,\ell}|\\
          & \leq 8 \cdot 2^{\ell}\hat \beta_{T,\ell-1} \sum_{s \in \Psi_{T+1,\ell}}\left\|\omega_s\cdot (\xb_s-\yb_s)\right\|_{\hat \bSigma_{s,\ell}^{-1}}^2\\
          & \leq 8 \cdot 2^{\ell}\hat \beta_{T,\ell-1}\cdot 2d \log\left(1+2^{2\ell+2}T/d\right),
      \end{align*}
      where the first inequality holds due to \eqref{eq:006}. The second inequality holds due to our choice of $\omega_s$ such that $\left\|\omega_s\cdot (\xb_s-\yb_s)\right\|_{\hat \bSigma_{s,\ell}^{-1}} = 2^{-\ell}$. The last inequality holds due to Lemma \ref{lemma:abbasi}. Therefore, we complete the proof of Lemma \ref{lemma:regret}.
\end{proof}

\section{Auxiliary Lemmas}
\begin{lemma}[Lemma 11, \citealt{abbasi2011improved}]\label{lemma:abbasi}
    For any $\lambda > 0$ and sequence $\{\xb_k\}_{k=1}^K \subseteq \RR^d$ for $k \in [K]$, define $\Zb_{k} = \lambda\Ib + \sum_{i=1}^{k-1}\xb_i\xb_i^\top$. Then, provided that $\|\xb_k\|_2 \leq L$ holds for all $k \in [K]$, we have
    \begin{align*}
        \sum_{k=1}^K \min\{1,\|\xb_k\|_{\Zb_k^{-1}}^2\} \leq 2d \log (1+KL^2/(d\lambda)).
    \end{align*}
\end{lemma}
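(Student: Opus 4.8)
The plan is to run the classical log-determinant (``elliptical potential'') telescoping argument, which is entirely deterministic. First I would observe that $\Zb_1 = \lambda\Ib \succ 0$ and $\Zb_{k+1} = \Zb_k + \xb_k\xb_k^\top$, so every $\Zb_k$ is positive definite, and the rank-one determinant update identity gives $\det(\Zb_{k+1}) = \det(\Zb_k)\bigl(1 + \|\xb_k\|_{\Zb_k^{-1}}^2\bigr)$. Telescoping this over $k = 1,\dots,K$ and using $\det(\Zb_1) = \lambda^d$ yields
\[
\det(\Zb_{K+1}) = \lambda^d \prod_{k=1}^{K}\bigl(1 + \|\xb_k\|_{\Zb_k^{-1}}^2\bigr).
\]

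Next I would pass from the truncated potentials to logarithms via the elementary inequality $\min\{1, x\} \le 2\log(1+x)$, valid for all $x \ge 0$: on $[0,1]$ the map $x \mapsto 2\log(1+x) - x$ vanishes at $0$ and has nonnegative derivative $(1-x)/(1+x)$, hence is nonnegative, while for $x \ge 1$ one has $1 \le 2\log 2 \le 2\log(1+x)$. Taking $x = \|\xb_k\|_{\Zb_k^{-1}}^2$ and summing over $k$ gives
\[
\sum_{k=1}^{K}\min\{1,\|\xb_k\|_{\Zb_k^{-1}}^2\}
\le 2\sum_{k=1}^{K}\log\bigl(1 + \|\xb_k\|_{\Zb_k^{-1}}^2\bigr)
= 2\log\frac{\det(\Zb_{K+1})}{\lambda^d}.
\]

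Finally I would bound $\det(\Zb_{K+1})$ from above by AM--GM on its (positive) eigenvalues: $\det(\Zb_{K+1}) \le \bigl(\mathrm{tr}(\Zb_{K+1})/d\bigr)^d$, and since $\mathrm{tr}(\Zb_{K+1}) = d\lambda + \sum_{k=1}^{K}\|\xb_k\|_2^2 \le d\lambda + KL^2$ under the assumption $\|\xb_k\|_2 \le L$, we obtain $\det(\Zb_{K+1})/\lambda^d \le (1 + KL^2/(d\lambda))^d$. Substituting this into the previous display and taking the logarithm produces the claimed bound $2d\log(1 + KL^2/(d\lambda))$.

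Since the whole argument is deterministic, there is no genuine obstacle; the only points needing any care are verifying the hypotheses of the rank-one determinant update (positive definiteness of $\Zb_k$, which follows from $\lambda > 0$) and checking the scalar inequality $\min\{1,x\} \le 2\log(1+x)$, both of which are routine. I note that this lemma is quoted from \citet{abbasi2011improved}, so in the paper it would simply be cited rather than re-proved.
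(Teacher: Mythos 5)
Your proof is correct and is exactly the standard elliptical potential argument from the cited reference: the rank-one determinant update, the telescoping product, the scalar inequality $\min\{1,x\}\le 2\log(1+x)$, and the AM--GM/trace bound on $\det(\Zb_{K+1})$ all check out. The paper itself does not re-prove this lemma --- it is stated as an auxiliary result and attributed to \citet{abbasi2011improved} --- so there is nothing to compare against beyond noting that your argument matches the classical one.
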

\begin{lemma}[\citealt{freedman1975tail}]\label{lemma:freedman}
    Let $M, v > 0$ be fixed constants. Let $\{x_i\}_{i=1}^n$ be a stochastic process, $\{\cG_i\}_{i\in [n]}$ be a filtration so that for all $i \in [n]$, $x_i$ is $\cG_i$-measurable, while almost surely
    \begin{align*}
        \EE[x_i|\cG_{i-1}]=0, |x_i| \leq M, \sum_{i=1}^n\EE[x_i^2|\cG_{i-1}] \le v.
    \end{align*}
    Then for any $\delta > 0$, with probability at least $1-\delta$, we have
    \begin{align*}
        \sum_{i=1}^n x_i \le \sqrt{2v\log(1/\delta)}  + 2/3\cdot M\log(1/\delta).
    \end{align*}
\end{lemma}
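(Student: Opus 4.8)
The plan is to prove this classical martingale bound by the exponential supermartingale (Chernoff) method, closing with a Bernstein-type optimization of the free parameter. Throughout write $\psi(u) := e^u - 1 - u$, and fix an arbitrary $\lambda \in (0, 3/M)$. The first step is a one-step control of the conditional moment generating function of each increment. The key analytic fact is that $x \mapsto (e^{\lambda x} - 1 - \lambda x)/x^2$ is nondecreasing on $\RR$ (with value $\lambda^2/2$ at $x=0$), so for every $x \le M$ one has $e^{\lambda x} - 1 - \lambda x \le \psi(\lambda M)\,x^2/M^2$. Applying $\EE[\cdot \mid \cG_{i-1}]$, using $\EE[x_i \mid \cG_{i-1}] = 0$ and $|x_i| \le M$, and then $1 + z \le e^z$, I would obtain
\begin{align*}
\EE\big[e^{\lambda x_i}\mid \cG_{i-1}\big] \le 1 + \frac{\psi(\lambda M)}{M^2}\,\EE\big[x_i^2\mid \cG_{i-1}\big] \le \exp\Big(\tfrac{\psi(\lambda M)}{M^2}\,\EE\big[x_i^2\mid \cG_{i-1}\big]\Big).
\end{align*}

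\textbf{Supermartingale and tail bound.} Let $S_k = \sum_{i=1}^k x_i$ and let $V_k = \sum_{i=1}^k \EE[x_i^2\mid\cG_{i-1}]$ be the predictable quadratic variation, and define
\begin{align*}
Z_k = \exp\Big(\lambda S_k - \tfrac{\psi(\lambda M)}{M^2}\,V_k\Big).
\end{align*}
Since $V_k$ is $\cG_{k-1}$-measurable, the one-step bound above gives $\EE[Z_k\mid\cG_{k-1}] \le Z_{k-1}$, so $(Z_k)$ is a nonnegative supermartingale with $\EE[Z_n] \le \EE[Z_0] = 1$. For any target $t > 0$, I would use the almost-sure hypothesis $V_n \le v$ together with $\psi(\lambda M)/M^2 > 0$ to note the inclusion $\{S_n \ge t\} \subseteq \{Z_n \ge \exp(\lambda t - \psi(\lambda M)v/M^2)\}$, and then Markov's inequality yields
\begin{align*}
\PP(S_n \ge t) \le \exp\Big(-\lambda t + \tfrac{\psi(\lambda M)}{M^2}\,v\Big)\,\EE[Z_n] \le \exp\Big(-\lambda t + \tfrac{\psi(\lambda M)}{M^2}\,v\Big).
\end{align*}

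\textbf{Optimization and inversion.} To turn this into the stated deviation, I would insert the elementary inequality $\psi(u) \le \tfrac{u^2/2}{1 - u/3}$, valid for $u \in [0,3)$ (which follows term-by-term from $k! \ge 2\cdot 3^{k-2}$), and then take the supremum over $\lambda \in (0,3/M)$ of the Chernoff exponent $\lambda t - \psi(\lambda M)v/M^2$. The standard Bennett-to-Bernstein simplification gives $\PP(S_n \ge t) \le \exp\!\big(-\tfrac{t^2}{2(v + Mt/3)}\big)$. Setting the right-hand side equal to $\delta$ produces the quadratic $t^2 - \tfrac{2M\log(1/\delta)}{3}\,t - 2v\log(1/\delta) = 0$, whose positive root is $t = \tfrac{M\log(1/\delta)}{3} + \sqrt{(M\log(1/\delta)/3)^2 + 2v\log(1/\delta)}$; loosening via $\sqrt{a+b} \le \sqrt{a} + \sqrt{b}$ reproduces exactly $\sqrt{2v\log(1/\delta)} + \tfrac{2}{3}M\log(1/\delta)$. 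The main obstacle here is not any single hard step but the bookkeeping of the final inversion: one must check that the maximizing $\lambda$ lies in $(0,3/M)$ so the $\psi$-bound is legitimate, and that the quadratic-solving step combined with subadditivity of the square root yields precisely the two advertised terms rather than a bound with inflated constants.
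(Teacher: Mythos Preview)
Your proof is correct and is the standard derivation of Freedman's inequality via the exponential supermartingale and Bennett--Bernstein optimization. The paper, however, does not give its own proof of this lemma: it is stated as an auxiliary result with a citation to \citet{freedman1975tail} and used as a black box, so there is no ``paper's approach'' to compare against. Your argument stands on its own and supplies precisely what the paper omits.
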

\begin{lemma}[\citealt{zhao2023variance}]\label{lemma:zhao}
    Let $\{\cG_k\}_{k=1}^\infty$ be a filtration, and $\{\xb_k,\eta_k\}_{k \geq 1}$ be a stochastic process such that $\xb_k \in \RR^d$ is $\cG_k$-measurable and $\eta_k \in \RR$ is $\cG_{k+1}$-measurable. Let $L,\sigma,\lambda,\epsilon > 0$, $\bmu^* \in \RR^d$. For $k \ge 1$, let $y_k = \la \bmu^*, \xb_k\ra + \eta_k$, where $\eta_k, \xb_k$ satisfy
    \begin{align*}
        \EE[\eta_k\mid\cG_k] = 0, |\eta_k| \leq R, \sum_{i=1}^k \EE[\eta_i^2\mid\cG_i] \leq v_k, \text{ for } \forall k \ge 1.
    \end{align*}
    For $k \ge 1$, let $\Zb_k = \lambda \Ib + \sum_{i=1}^k \xb_i\xb_i^\top$, $\bbb_k = \sum_{i=1}^k y_i\xb_i$, $\bmu_k = \Zb_k^{-1} \bbb_k$ and
    \begin{align*}
        \beta_k = 16 \rho \sqrt{v_k \log(4k^2/\delta)} + 6\rho R \log(4k^2/\delta),
    \end{align*}
    where $\rho \ge \sup_{k\ge 1}\|\xb_k\|_{\Zb_{k-1}^{-1}}$. Then, for any $0 < \delta < 1$, we have with probability at least $1-\delta$,
    \begin{align*}
        \forall k \ge 1, \|\sum_{i=1}^k\xb_i\eta_i\|_{\Zb_k^{-1}} \leq \beta_k, \|\bmu_k - \bmu^*\|_{\Zb_k} \leq \beta_k + \sqrt{\lambda}\|\bmu^*\|_2
    \end{align*}
\end{lemma}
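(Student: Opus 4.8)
The plan is to first reduce the estimator bound to the noise bound, and then prove the latter via a Bernstein-type self-normalized martingale argument.

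First I would dispatch the algebraic reduction. Since $\bbb_k=\sum_{i=1}^k y_i\xb_i=\big(\sum_{i=1}^k\xb_i\xb_i^\top\big)\bmu^*+\sum_{i=1}^k\eta_i\xb_i=(\Zb_k-\lambda\Ib)\bmu^*+\sum_{i=1}^k\eta_i\xb_i$, we obtain $\bmu_k-\bmu^*=\Zb_k^{-1}\big(\sum_{i=1}^k\eta_i\xb_i-\lambda\bmu^*\big)$, hence $\|\bmu_k-\bmu^*\|_{\Zb_k}\le\|\sum_{i=1}^k\eta_i\xb_i\|_{\Zb_k^{-1}}+\lambda\|\bmu^*\|_{\Zb_k^{-1}}$. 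Because $\Zb_k\succeq\lambda\Ib$ gives $\Zb_k^{-1}\preceq\lambda^{-1}\Ib$, the last term is at most $\sqrt\lambda\|\bmu^*\|_2$. So the whole statement reduces to showing $\|\sum_{i=1}^k\xb_i\eta_i\|_{\Zb_k^{-1}}\le\beta_k$ for all $k$ with probability at least $1-\delta$.

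For that bound, write $S_k=\sum_{i=1}^k\xb_i\eta_i$. The key difficulty is avoiding the spurious $\sqrt d$ that a covering-net union bound over the (random) unit $\Zb_k$-ball would introduce, since $\beta_k$ scales with $\rho$ rather than $\sqrt d$. I would therefore use the method of mixtures (pseudo-maximization, in the spirit of \citet{abbasi2011improved}) adapted to the Bernstein setting: using $|\eta_i|\le R$ and $\sum_{i\le k}\EE[\eta_i^2\mid\cG_i]\le v_k$, for any direction $v$ with small $|\langle v,\xb_i\rangle|$ one has an exponential supermartingale $\exp\big(\langle v,S_k\rangle-c\sum_{i\le k}\langle v,\xb_i\rangle^2\EE[\eta_i^2\mid\cG_i]\big)$, valid only while the exponent stays in the admissible Bernstein range (where the MGF is still sub-Gaussian-like), which is where $\|\xb_i\|_{\Zb_{i-1}^{-1}}\le\rho$ enters. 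Integrating this supermartingale against a Gaussian prior on $v$ truncated to a ball of radius $\Theta(1/(\rho R))$ — the truncation being exactly what keeps the effective exponent admissible — and optimizing over the prior variance yields, on the event $\{\sum_{i\le k}\EE[\eta_i^2\mid\cG_i]\le v\}$ and for fixed $k$, a bound $\|S_k\|_{\Zb_k^{-1}}\le c_1\rho\sqrt{v\log(k/\delta')}+c_2\rho R\log(k/\delta')$ on the range of values covered by the truncated mixture; the complementary (large-value) range is patched by a crude direct estimate on the elliptical potential (Lemma~\ref{lemma:abbasi}). To make the bound adaptive in the unknown $v_k$ I would run the above on a geometric grid $v\in\{R^2,2R^2,4R^2,\dots\}$ up to a trivial cap, select the smallest grid value exceeding $v_k$ (losing only a factor $2$), and union bound over the $O(\log(kR^2))$ grid points and over $k\ge1$, using $\sum_{k\ge1}1/(4k^2)<1$ so that the per-$k$ failure probability $\delta/(4k^2)$ sums to at most $\delta$; this produces the $\log(4k^2/\delta)$ in $\beta_k$ and, with care, the explicit constants $16\rho$ and $6\rho R$.

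The main obstacle is the Bernstein refinement of the mixture step: the exponential supermartingale is only valid below an $R$-dependent threshold on the effective exponent, so the mixing measure must be truncated, and one must verify both that the truncation perturbs the pseudo-maximization bound only by lower-order terms and that, combined with the patch for large values, it covers the entire possible range of $\|S_k\|_{\Zb_k^{-1}}$. Chasing the constants through the truncated Laplace integral so that they emerge as small as $16$ and $6$ — while keeping the dependence on $\rho$ rather than $d$ — is the delicate part; the reduction, the peeling over the variance budget, and the union over $k$ are essentially bookkeeping.
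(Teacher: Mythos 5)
The paper does not actually prove this statement: Lemma~\ref{lemma:zhao} is imported as a black box from \citet{zhao2023variance}, so the comparison is with the proof in that reference. Your reduction of the estimator bound to the noise bound is correct and standard ($\bmu_k-\bmu^*=\Zb_k^{-1}(S_k-\lambda\bmu^*)$ with $S_k=\sum_{i\le k}\xb_i\eta_i$, and $\lambda\|\bmu^*\|_{\Zb_k^{-1}}\le\sqrt{\lambda}\|\bmu^*\|_2$). For the core concentration step, however, the source does not use the method of mixtures at all. It telescopes
\begin{align*}
\|S_k\|_{\Zb_k^{-1}}^2\le\sum_{i=1}^k\Big(2\eta_i\big\langle\xb_i,\Zb_{i-1}^{-1}S_{i-1}\big\rangle+\eta_i^2\|\xb_i\|_{\Zb_{i-1}^{-1}}^2\Big),
\end{align*}
applies the scalar Freedman inequality (Lemma~\ref{lemma:freedman}) to the cross term --- whose increments are bounded by $2\rho R\max_{i<k}\|S_i\|_{\Zb_i^{-1}}$ and whose predictable variance is at most $4\rho^2 v_k\max_{i<k}\|S_i\|_{\Zb_i^{-1}}^2$, which is precisely how $\rho$ enters --- controls $\sum_i\eta_i^2\|\xb_i\|_{\Zb_{i-1}^{-1}}^2\le\rho^2\sum_i\eta_i^2$ by a second Freedman application, and closes by induction on $\max_{i\le k}\|S_i\|_{\Zb_i^{-1}}$ by solving the resulting quadratic inequality. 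Every step is scalar and elementary; no pseudo-maximization is needed.

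Your truncated-mixture route has a concrete unresolved obstruction, not just delicate constant-chasing. The Bernstein moment bound for the increment $\langle v,\xb_i\rangle\eta_i$ requires $|\langle v,\xb_i\rangle|\lesssim 1/R$, and the only way the hypothesis $\|\xb_i\|_{\Zb_{i-1}^{-1}}\le\rho$ can certify this is via $|\langle v,\xb_i\rangle|\le\|v\|_{\Zb_{i-1}}\|\xb_i\|_{\Zb_{i-1}^{-1}}\le\rho\,\|v\|_{\Zb_{i-1}}$; but you truncate the Gaussian prior in the Euclidean norm $\|v\|_2$, while $\|v\|_{\Zb_{i-1}}$ grows without bound as data accumulates, so a fixed truncation radius $\Theta(1/(\rho R))$ does not keep the exponent in the admissible range uniformly in $i$. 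In addition, your ``patch'' for the range not covered by the truncated mixture --- a crude bound via the elliptical potential lemma --- yields only $\|S_k\|_{\Zb_k^{-1}}\le R\sum_{i\le k}\|\xb_i\|_{\Zb_k^{-1}}=O\big(R\sqrt{dk\log k}\big)$, which reintroduces exactly the $\sqrt{d}$ (and a $\sqrt{k}$) that $\beta_k$ is designed to avoid, so it cannot close the large-deviation regime. As written, the central inequality $\|S_k\|_{\Zb_k^{-1}}\le\beta_k$ is not established; replacing the mixture step with the recursive Freedman argument above repairs the proof.
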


\begin{theorem}[Brouwer invariance of domain theorem,\citealt{brouwer1911beweis}]\label{thm:brouwer} Let $U$ be an open subset of $\RR^d$, and let $f: U \rightarrow  \RR^d$ be a continuous injective map. Then $f(U)$ is also open.
\end{theorem}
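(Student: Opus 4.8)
The plan is to reduce the global statement to a local interior-point statement and then invoke the separation properties of embedded spheres, which are most cleanly obtained from singular homology. To show that $f(U)$ is open it suffices to show that every point of $f(U)$ is interior, i.e., that for each $x_0 \in U$ the image $f(x_0)$ is an interior point of $f(U)$. Since $U$ is open I can choose a closed ball $\overline{B} = \overline{B}(x_0,r) \subseteq U$; because $f$ is continuous and injective and $\overline{B}$ is compact, $f$ restricts to a homeomorphism of $\overline{B}$ onto its image, so that $f|_{\partial B}$ and $f|_{\overline{B}}$ are topological embeddings of the sphere $S^{d-1}$ and the disk $D^d$, respectively. It therefore suffices to prove that $f(\mathrm{int}\,B)$ is open.

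To compactify the target, I would embed $\RR^d$ into its one-point compactification $S^d = \RR^d \cup \{\infty\}$, so that $f|_{\overline{B}}$ becomes an embedding of a $d$-disk and $f|_{\partial B}$ an embedding of a $(d-1)$-sphere into $S^d$. The two homological facts I would establish, by induction on the dimension of the cell, are: (i) the complement $S^d \setminus f(D^d)$ of an embedded $d$-disk has trivial reduced homology, hence is connected; and (ii) the complement $S^d \setminus f(S^{d-1})$ of an embedded $(d-1)$-sphere has $\tilde H_0 \cong \mathbb{Z}$, hence has exactly two connected components. These are the classical Jordan--Brouwer separation statements. Proving them is the technical heart: they follow from a decomposition argument in which one repeatedly subdivides the disk or sphere into two halves glued along a lower-dimensional cell and applies the Mayer--Vietoris long exact sequence, using that a nonzero reduced homology class in the complement of the whole must survive in the complement of at least one half, which iterated to the limit contradicts the $0$-dimensional base case.

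With (i) and (ii) in hand the conclusion is immediate. The open set $S^d \setminus f(\partial B)$ decomposes as the disjoint union of $f(\mathrm{int}\,B)$ and $S^d \setminus f(\overline{B})$: both pieces are nonempty and connected (the former as the continuous image of the connected set $\mathrm{int}\,B$, the latter by (i)), they are disjoint, and their union is all of $S^d \setminus f(\partial B)$. By (ii) this ambient set has exactly two connected components, so each of the two pieces must coincide with a single component and is therefore open. In particular $f(\mathrm{int}\,B)$ is open in $S^d$, hence open in $\RR^d$, and since $f(x_0) \in f(\mathrm{int}\,B) \subseteq f(U)$, the point $f(x_0)$ is interior to $f(U)$. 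As $x_0$ was arbitrary, $f(U)$ is open.

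The main obstacle is establishing (i) and (ii): these separation results are genuinely non-elementary and require the homology machinery (the Mayer--Vietoris sequence and the computation of the homology of spheres) together with the inductive subdivision argument. Everything else---the reduction to a local interior-point claim, the observation that an injective continuous map on a compact set is a homeomorphism onto its image, and the final counting of connected components---is routine.
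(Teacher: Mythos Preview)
Your outline is the standard algebraic-topology proof of invariance of domain (reduction to a local statement, compactification to $S^d$, Jordan--Brouwer separation via the Mayer--Vietoris subdivision argument, and a component count), and it is correct as sketched. However, there is nothing to compare against: the paper does not prove this theorem. It is listed among the auxiliary lemmas with a citation to \citet{brouwer1911beweis} and is invoked as a black box in Section~\ref{section:weakness} to show that the map $G$ is an open map, from which surjectivity of $G$ follows. So your proposal is not ``the paper's approach'' nor ``a different route from the paper''; it is simply a valid proof of a classical result that the paper takes for granted.
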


\end{document}